\title{ArCHer: Training Language Model Agents via Hierarchical Multi-Turn RL}
\newcommand\numberthis{\addtocounter{equation}{1}\tag{\theequation}}
\newtheorem{assumption}{Assumption}
\title{ArCHer: Training Language Model Agents via Hierarchical Multi-Turn RL}
\definecolor{deepblue}{rgb}{0,0,0.5}
\definecolor{deepred}{rgb}{0.6,0,0}
\definecolor{deepgreen}{rgb}{0,0.5,0}
\newcommand\pythonstyle{\lstset{
basicstyle=\ttfamily\footnotesize,
language=Python,
morekeywords={self, clip, exp, mse_loss, uniform_sample, concatenate, logsumexp},              
keywordstyle=\color{deepblue},
emph={MyClass,__init__},          
emphstyle=\color{deepred},    
stringstyle=\color{deepgreen},
frame=single,                         
showstringspaces=false
}}
\newcommand\pythoninline[1]{{\pythonstyle\lstinline!#1!}}
\def\mathcolor#1#{\@mathcolor{#1}}
\def\@mathcolor#1#2#3{%
  \protect\leavevmode
  \begingroup
    \color#1{#2}#3%
  \endgroup
}
\Crefname{assumption}{Assumption}{Assumptions}
    \let\Cref\crtCref
    \let\cref\crtcref
\author[1]{Yifei Zhou}
\author[1]{Andrea Zanette}
\author[1]{Jiayi Pan}
\author[1]{Sergey Levine}
\author[2]{Aviral Kumar}
\affil[1]{University of California, Berkeley}
\affil[2]{Google DeepMind}
\begin{abstract}
\textbf{Abstract:} A broad use case of large language models (LLMs) is in goal-directed decision-making tasks (or ``agent'' tasks), where an LLM needs to not just generate probable completions for a given prompt, but rather make intelligent decisions over an extended period of multi-turn interaction to accomplish a task (e.g., when interacting with the web, using software tools, or engaging in customer support). Reinforcement learning (RL) provides a general paradigm to address such agent tasks, but current RL methods for LLMs largely focus on single-turn reward maximization. By construction, single-turn RL methods of today cannot actually train LLMs to intelligently seek and incorporate information over multiple turns, perform credit assignment, or reason about their past actions -- all of which are critical in agent tasks. This raises the question: how can we design effective and efficient multi-turn RL algorithms for LLMs? In this paper, we propose an algorithmic framework for developing multi-turn RL algorithms for fine-tuning LLMs, that preserves the flexibility of existing single-turn RL methods for LLMs (e.g., proximal policy optimization), while accommodating multiple turns, long horizons, and delayed rewards effectively. To do this, our framework adopts a hierarchical RL approach and runs two RL algorithms in parallel: a high-level off-policy RL algorithm that trains a value function to aggregate reward over utterances, and a low-level RL algorithm that utilizes this high-level value function (in place of a reward model used in single-turn RL) to train a token-by-token policy within each utterance or turn. This hierarchical approach prescribed by our framework, \textbf{A}cto\textbf{r}-\textbf{C}ritic Framework with a \textbf{H}i\textbf{e}rarchical Structu\textbf{r}e (\textbf{ArCHer}), can also give rise to a number of other RL approaches. Empirically, we find that ArCHer significantly improves efficiency and performance on multi-turn tasks, attaining sample efficiency of about \textbf{100x} over existing on-policy methods, while also benefitting favorably from scaling up model capacity (upto the 7 billion scale that we could test on in our experiments). Project page can be found in \href{https://yifeizhou02.github.io/archer.io/}{https://yifeizhou02.github.io/archer.io/} and code can be found in \href{https://github.com/YifeiZhou02/ArCHer}{https://github.com/YifeiZhou02/ArCHer}.

\end{abstract}
\begin{document}

\maketitle

\vspace{-0.25cm}
\section{Introduction}
\vspace{-0.2cm}

Owing to their generalist knowledge, large language models (LLMs) have a tremendous potential to address a wide variety of decision-making or ``agent'' problems that can be expressed in text or natural language, from writing code~\citep{intercode, Li2022CompetitionlevelCG, nl2bash}, navigating the web~\citep{Zhou2023WebArenaAR, webshop}, and using tools~\citep{toolformer}, all the way to interacting with humans~\citep{Ghosal2022CICEROAD, chai, humancentric}.
In order to succeed in these domains, an LLM needs to make a \emph{sequence} of intelligent decisions over multiple turns of interaction instead of generating the most \emph{probable} text completion at each step.

Despite these multi-turn agent problems, most methods for eliciting goal-directed behavior from LLMs often rely on myopic objectives that either attempt to mimic successful demonstrations at each step~\citep{agenttuning, Chen2023FireActTL}, or otherwise optimize for single-turn preferences~\citep{touvron2023llama,Ouyang2022TrainingLM, anthropicrlhf}. Policies trained via single-turn approaches often fail to perform effective credit assignment (e.g., they fail to identify good actions that may lead to long-term future performance despite appearing suboptimal at a given step) and do not endow policies with information-seeking behavior, which is important in agent problems (e.g., when dealing with a new tool). Therefore, in this paper, we consider the problem of building multi-turn RL approaches that are able to directly maximize long-term objective of interest (e.g., customer satisfaction at the end of a multi-turn conversation with an LLM assistant), formulated via a scalar reward function.

\begin{figure*}[t]  
    \centering
    \includegraphics[width=0.95\linewidth]{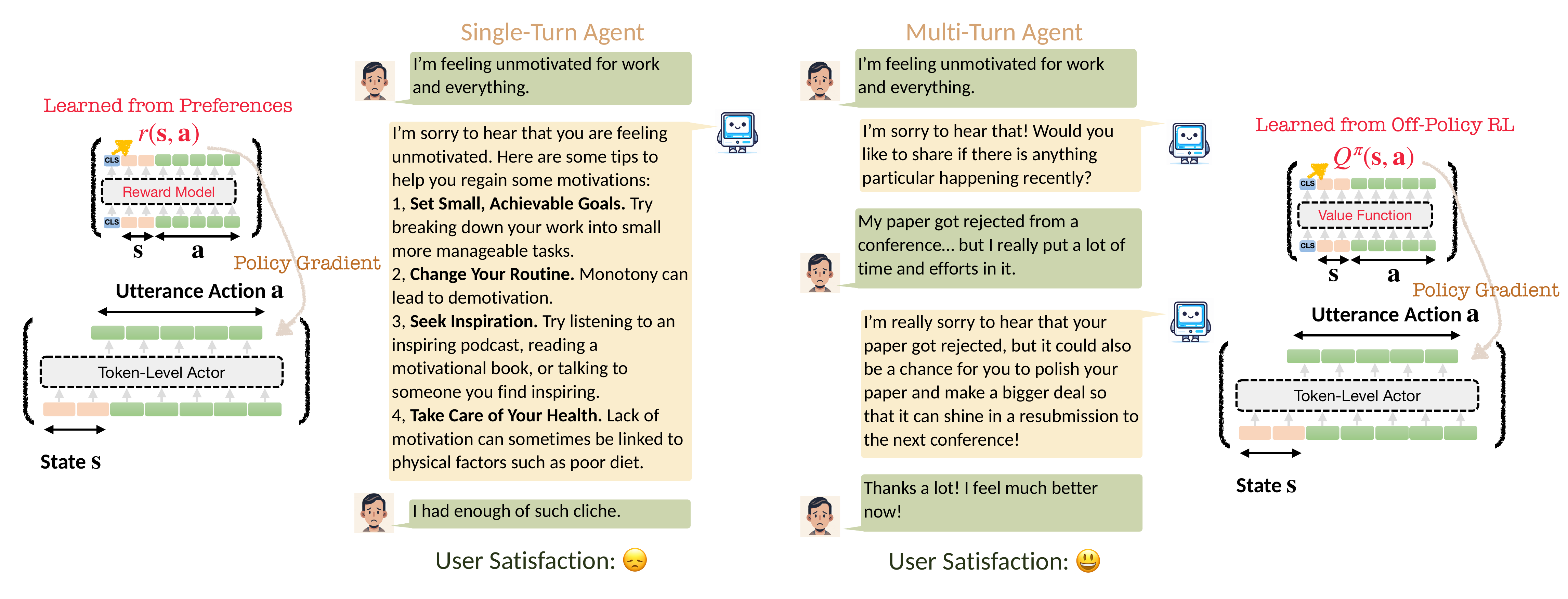}
    \caption{\footnotesize{\textbf{Single-turn RL vs multi-turn RL for LLMs (ours).} In particular, the single-turn RL agent seeks to resolve the request within a single turn and hence ends up providing as much information as possible in its response. On the contrary, the multi-turn RL agent can execute information-gathering actions and address requests in a targeted manner over turns. While current single-turn RL methods for LLMs abstractly use some form of policy gradients computed using a  reward model to train the LLM policy, the method proposed in our paper extends this paradigm to the multi-turn setting by now replacing the reward model with a learned value function, which is trained with off-policy reinforcement learning.}}
    \label{fig:teaser}
\end{figure*}

Unlike single-step RL~\citep{touvron2023llama,Ouyang2022TrainingLM, anthropicrlhf}, training LLMs via multi-turn RL presents a number of unique challenges. First of all, multi-turn RL would require online interaction with external sources such as humans or web servers, which can be slow and expensive. Due to this, on-policy methods such as PPO~\citep{ppo} quickly become impractical owing to their inability to reuse data from past interaction. While off-policy fully offline methods circumvent this problem, these methods present other challenges:  since the number of tokens increase with multiple turns (often substantially so, due to the bias of LLMs towards producing long responses), token-level methods~\citep{ilql,humancentric} that consider individual tokens as actions need to now propagate reward signal over extremely long horizons. This results in extremely slow learning speeds over long horizons for token-level algorithms.
For example, token-level ILQL~\citep{ilql} takes more than 10 days to converge on a task in our experiments, while filtered behavior cloning takes less than a day.
In principle, to address long horizon issues, one can treat the entire utterance for each turn as an action~\citep{chai, gptcritic}, but this comes at the cost of introducing an enormous, variable-size action space, presenting a challenge for off-policy methods based on temporal-difference (TD) learning that require maximization over the action at each time step. This necessitates a multi-turn RL framework that can attain a sweet spot in terms of the aforementioned challenges.

In this paper, we devise a framework for building multi-turn RL algorithms that attains this kind of a sweet spot. Our key insight is that a \textbf{hierarchical approach} for RL with language models that addresses the challenges with both on-policy and off-policy RL approaches as outlined above. Specifically, our framework prescribes an off-policy temporal difference learning method for training an utterance-level value function at the high level, and any on-policy policy gradient algorithm for optimizing the token generation at each turn of the interaction at the low level, treating the high-level value function as the terminal reward for that turn.  Unlike on-policy methods, this allows for sample reuse and faster convergence, while avoiding Bellman backups over individual tokens or maximization over enormous action spaces, as the high-level critic is trained at a coarser time-scale, on tokens produced by the actor. In addition, it also directly inherits implementation details from existing token-level RL algorithms developed for single-turn RL with preferences, for training the policy. This way we are able to obtain the best of both utterance-based and token-based, and off-policy and on-policy approaches for training LLMs.

Our main contribution is a framework for developing hierarchical RL approaches for LLMs, that we call: \textbf{A}cto\textbf{r}-\textbf{C}ritic framework with a \textbf{H}i\textbf{e}rarchical Structu\textbf{r}e (or \textbf{ArCHer} in short). We study several concrete algorithmic instantations derived from the ArCHer framework by conducting experiments on a range of language ``agent'' tasks with active data collection (i.e., the ``online'' setting). We find that algorithms derived from ArCHer are 100x more sample efficient than on-policy methods such as PPO, and converge to a better performance than off-policy methods. Moreover, our methods are easy to build on existing single-turn RL methods and scale to different transformer architectures and more parameters (we show effectiveness of our approach up to the 7 billion scale), directly enabling plug-and-play choices of RL algorithms and models.

\vspace{-0.25cm}
\section{Related Work}
\vspace{-0.25cm}

\textbf{Single-turn reinforcement learning for LLMs.} Most prior works that use RL for LLMs have focused on decision-making problems where the language model must produce a single decision, with no further steps of interaction with an external environment (e.g., the ``single-turn'' preference optimization setting~\citep{casper2023open, christiano2023deep, ziegler2019finetuning}). 
Typical algorithms used for this sort of single-turn RL include policy-gradient methods such as PPO~\citep{Ouyang2022TrainingLM, gpt4, nlpo}, A2C \citep{sparrow}, offline optimization methods such as DPO \citep{dpo}, and filtered supervised learning approaches~\citep{yuan2023rrhf,korbak2023pretraining,gulcehre2023reinforced}. Despite promising results, there remain many important agent problems that cannot be solved in a single-turn setting. Many of these problems require the agent to explicitly take the steps to gather information before making a decision, such as asking for personalized preferences before making a travel plan \citep{hong2023zeroshot} or initially attempting to read the help manual of the shell in a linux terminal, before carrying out the requested tasks~\citep{agentbench}. Single-step approaches cannot learn such nuanced strategies as they attempt to solve the problem within a single step, necessitating multi-turn RL methods for training LLMs.

\textbf{Training language agents without RL.} Motivated by few-shot learning and reasoning abilities of LLMs, prior works also utilize LLMs for sequential decision-making via prompt engineering. For example, ReAct~\citep{yao2023react} and Reflexion~\citep{Shinn2023ReflexionLA} prompt the LLM to ``think'' and analyze past failures before executing the next action. Voyager~\citep{Wang2023VoyagerAO} prompts the LLM agent to develop and refine a curriculum and action library based on environment input. However, without updating the parameters of the LLM, the effectiveness of these methods is inherently limited by the intrinsic capabilities obtained from pre-training \citep{agenttuning, Chen2023FireActTL}. Even state-of-the-art models such as GPT-4 with in-context learning can perform very sub-optimally in out-of-distribution settings~\citep{agentbench,yang2023leandojo} without updating the model. 
To improve over pre-trained capabilities, another line of work finetunes LLMs with successful trajectories (generated manually or by rolling out a strong pre-trained LLM)~\citep{toolformer, agenttuning, Chen2023FireActTL}. However, manual labels and tool call annotations are be expensive to obtain. Moreover, it would be prohibitively expensive for automated approaches to stumble upon successful rollouts will as the task horizon increases~\citep{agentbench, lmrl}. Therefore, in this paper, we sidestep these problems by directly maximizing the objective of interest via RL.
\textbf{Multi-turn RL for LLMs.} While many prior works directly use
off-the-shelf policy-gradient methods, such as PPO \citep{ppo, llarp, webshop} and REINFORCE \citep{suttonpolicy, Williams2004SimpleSG, ranzato2015sequence, wu2018learning, paulus2017deep} to train LMs, these methods can become sample inefficient in multi-step settings that require interaction with an external environment~\citep{chai, gptcritic}. To address such sample complexity issues, off-policy and offline value-based methods learn from existing static data~\citep{ilql, humancentric, chai, gptcritic}. However, existing off-policy methods for multi-turn language tasks either \textbf{(1)} consider a single token as an action (i.e., ``token-level'')~\citep{ilql, humancentric} and  must deal with long horizons, or \textbf{(2)} consider an utterance as a single action, but utilize multiple candidate utterances from a frozen pre-trained LLM for maximization in the Bellman backup~\citep{chai, gptcritic}, reducing the pace of policy improvement, as we also find in our experiments. Our approach will address both of these limitations.

\vspace{-0.2cm}
\section{\textbf{A}cto\textbf{r}-\textbf{C}ritic Framework with a \textbf{H}i\textbf{e}rarchical Structu\textbf{r}e (\textbf{ArCHer})}
\vspace{-0.15cm}

\begin{figure*}[t]  
    \centering
    \includegraphics[width=0.95\linewidth]{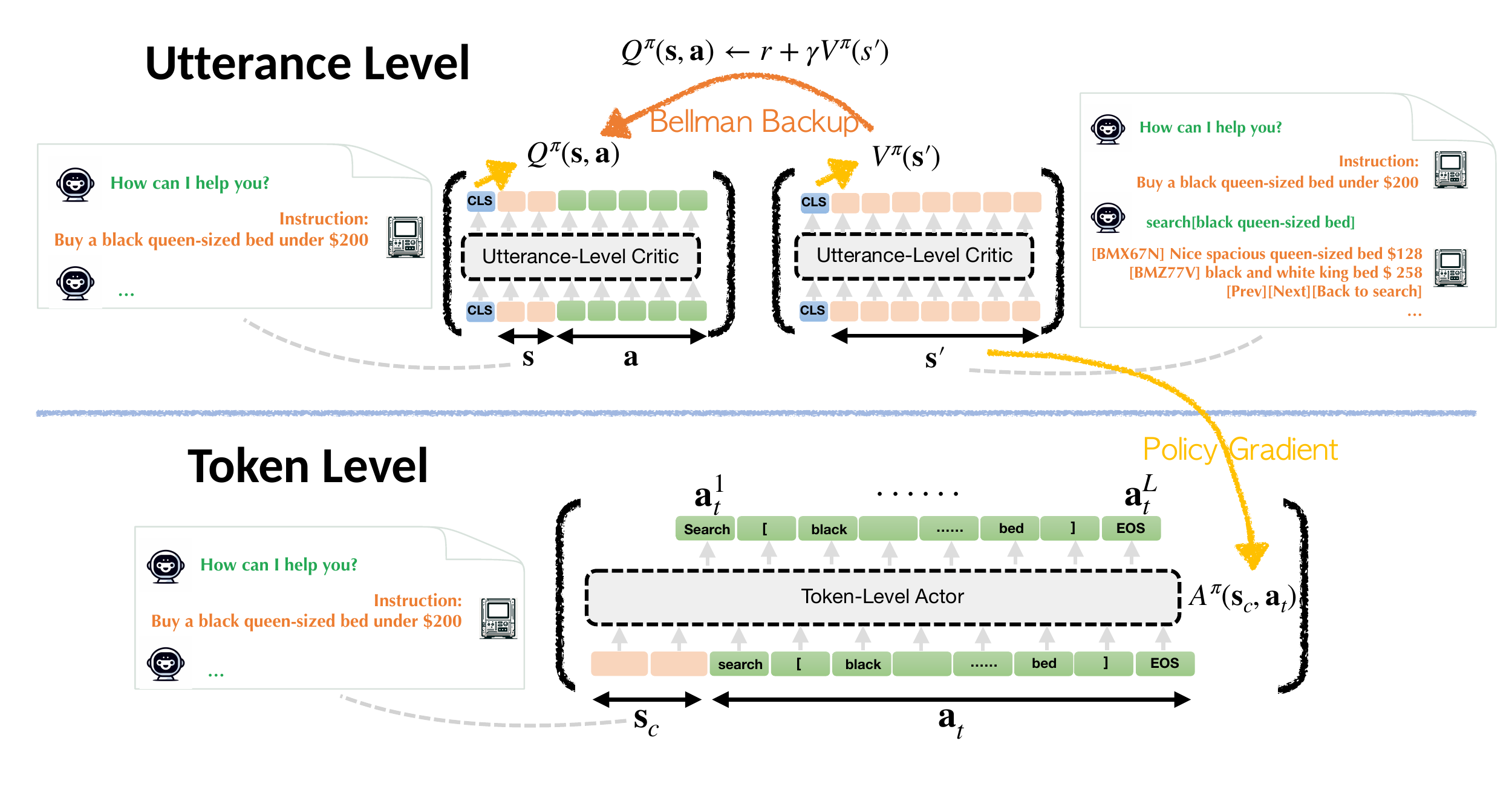}
    \caption{\footnotesize{\textbf{Schematic of the practical instantiation of Actor-Critic Framework with a Hierarchical Structure (ArCHer).} Our algorithm operates both at the utterance level and the token level. At the utterance level, our algorithm learns a Q-function, via Bellman bootstrapping with TD errors. At the token level, the policy is learned by maximizing the advantage function induced by the utterance-level Q-function using a policy gradient approach, where this advantage estimate is provided as a reward at the end of the sequence of tokens appearing within the utterance.}}
    \label{fig:diagram}
    \vspace{-0.5cm}
\end{figure*}
Existing RL methods that consider an individual token as an action suffer from very long horizons with multiple turns. Utterance-level RL methods avoid this challenge, but now they must tractably maximize over a coherent set of tokens within an utterance.
To address these issues, we will develop a class of hierarchical RL methods, ArCHer, that bypass these challenges by running two RL algorithms, in parallel. We start by describing how multi-turn language generation can be posed as a hierarchical Markov decision process (MDP), followed by building RL methods in this hierarchical MDP.

\vspace{-0.2cm}
\subsection{Language Generation as a Hierarchical MDP}
\label{section: hierarchical_mdp}
\vspace{-0.2cm}

In order to derive effective multi-turn RL algorithms, we will now present a novel formulation of language generation as acting in a hierarchical MDP. Our construction defines a \emph{high-level} MDP, and a \emph{low-level} MDP embedded inside the high-level MDP. For both MDPs, states are a variable-length sequence of tokens. For the high-level MDP, an action is defined as a sequence of tokens. An action in the low-level MDP is a single token, such that executing a sequence of actions in the low-level MDP corresponds to a single action in the high-level MDP. 
Formally, each state $s_t$ in the high-level MDP consists of an interaction history between the LLM and the external environment, and each action $a_t$ in this MDP is a variable-length sequence of tokens. The low-level MDP models the generation of the high-level action produced by the agent \emph{within} a single high-level action, where each low-level action $a_t^h$ is an individual token (i.e., the $h$-th token in the $t$-th high-level action). A state in this low-level MDP is obtained by concatenating a high-level state $s_{c}$ consisting of the interaction history until this turn and $a_t^{1:h-1}$, the history of individual action tokens produced within the current turn before step $h$. The next state is obtained by concatenating the current action to this token history.
Figure \ref{fig:diagram} shows a visualization of our notations for the hierarchical MDP.


Policy optimization in the high-level MDP aims to maximize task reward, whereas a policy in the low-level MDP attempts to find a sequence of up to $L$ tokens $a_t^{1:L}$ that maximizes reward equal to the value function of the high-level MDP $Q^\pi(s, a_t^{1:L})$, provided at the end of the low-level rollout. A rollout in the low-level MDP ends as soon as the policy ends up choosing to produce an ``EOS'' token. 

A concrete and natural instantiation of this hierarchical framework in the context of multi-turn interactions is when each turn or an ``utterance'' corresponds to a single time step in the high-level MDP, and each token within a turn is an action in the low-level MDP. In other words, this construction chooses to use the utterance-level MDP (Figure~\ref{fig:diagram}) at the high level and the token-level MDP at the low level. For example, in the context of a web agent, the state would be the interaction of web pages visited so far and a high-level action would be an utterance, e.g., ``search[queen-sized bed, black]''. A candidate reward function would be +1 if the correct item can be bought and 0 otherwise. The dynamics would involve the web engine reacting to an action. Each token of an utterance (e.g., ``search[queen-sized bed, black]'') would be an individual action in the embedded token-level MDP, for example, an action would be individual tokens ``search'', ``['', ``queen''. 


\vspace{-0.25cm}
\subsection{Preliminaries: Reinforcement Learning Definitions} 
\vspace{-0.2cm}

In order to describe the details of our framework, we first provide a few standard RL definitions.
The Q function of a  policy $\pi$ is the expected long-term return obtained by executing a certain action at the current step, followed by executing $\pi$ thereafter: {$Q^\pi(s_h,a_h) = \EE_{\pi} \left[\sum_{t=0}^\infty \gamma^i r(s_{h+t}, a_{h+t})\right]$}.
The value function $V^\pi(s_h)$ is given by taking an expectation of the Q-value, $Q^\pi(s_h, a_h)$, under actions $a_h$ sampled from the policy $\pi$. The advantage $A^\pi(s_h,a_h)$ of a state-action pair is the difference between its Q-value and the value of the state under the policy: $A^\pi(s_h,a_h) = Q^\pi(s_h, a_h) - V^\pi(s_h)$. We will denote the value function in the low-level MDP as $\widetilde{V}(s_c, a_h^{1:i-1})$.

\vspace{-0.2cm}
\subsection{RL Algorithms in the Hierarchical Language MDP}
\label{sec:hierarchical_mdp}
\vspace{-0.2cm}

The proposed hierarchical MDP provides flexibility in designing multi-turn RL algorithms: we could use any choice of RL algorithm for either the high or the low level. That said, note that only the high level requires interaction with a (non-differentiable) environment, while the low level optimizes against the high-level value function, and therefore trains entirely ``in silico,'' without any interaction with an environment. Therefore, the requirements on these methods are different: the high-level algorithm should be highly sample efficient, while the low-level algorithm should be easy to optimize. A particularly convenient choice is to use TD learning at the high level, while using on-policy methods~\citep{Ouyang2022TrainingLM, anthropicrlhf} at the low level.

\vspace{-0.2cm}
\subsection{A Practical Instantiation of ArCHer for Sample-Efficient Online RL}\label{section: practical}
\vspace{-0.15cm}

For deriving a concrete practical algorithm, we will utilize the natural hierarchical MDP induced in multi-turn language interaction: the utterance-level MDP at the high level and the embedded token-level MDP at low level, as discussed at the end of Section~\ref{section: hierarchical_mdp}. In this setting, our approach would train an utterance-level critic with TD backups and a token-level policy with policy gradients. 

\textbf{High-level utterance critic.} Following the practice in prior RL algorithms~\citep{ilql}, we train two LLM models at the high-level, one to represent the utterance-level Q-function $Q^\pi_\theta(s,a)$, and one to represent the utterance-level value function, $V^\pi_\psi(s)$. The Q-model is trained on Bellman targets computed from a delayed copy of the value-model. And the value model, in turn, is trained to approximate the expected value of the Q-model on token sequences (i.e., utterances) obtained by sampling autoregressively from the low-level policy, $\pi_\phi$. 
Due to the off-policy nature of this training process, we store and train on data from all previous online interactions $\Dcal=\{s_i, a_i, r_i, s_i'\}^N_{i=1}$. The objective for training the Q-model and the value-model are formally given by:
\begin{align}
    J_Q(\theta) = \EE_{s, a, r, s' \sim \Dcal}\left[(Q_\theta(s,a) -r - \gamma V_{\bar{\psi}}(s'))^2\right]. &~~~~\text{(Bellman consistency between $Q_\theta$ and $V_{\bar{\psi}}$)} \label{equation: JQ}\\
    J_V(\psi) = \EE_{s \sim \Dcal}\left[\EE_{a \sim \pi_\phi(\cdot|s)}\left[(V_\psi(s) - Q_{\bar{\theta}}(s,a))^2\right]\right]. &~~~~\text{(Train $V_{\psi}$ to approximate $\EE_{a \sim \pi_\phi(\cdot|s)}[Q_{\bar{\theta}}(s, a)])$} \label{equation: JV}
\end{align}
We estimate Equation~\ref{equation: JV} by sampling a batch of $n$ observations $\{s_i\}_{i=1}^n$, followed by auto-regressively sampling token sequences from the actor $\{a_i^L\}_{i=1}^n$. The delayed target models $Q_{\bar{\theta}}$ and $V_{\bar{\psi}}$ are updated towards their current counterparts with Polyak averaging~\citep{sac}.

\textbf{Low-level token actor.} At the low-level, we train the token-level actor $\pi_{\phi}(\cdot|s_c, a_t^{1:h})$ via an on-policy policy gradient approach to find a sequence of tokens that maximizes the prediction of the Q-model. To reduce variance, we use advantage values derived from the Q-model as the terminal reward. This subroutine for training the actor generalizes single-turn RL methods from RLHF, except that the terminal reward is now an estimate of the multi-turn advantage instead of a reward model. Concretely, we update the token-level policy with the policy gradient computed via REINFORCE \citep{Williams2004SimpleSG}:
\vspace{-4mm}
\begin{gather}\label{equation: reinforce}
  J_\phi(\pi) = \mathbb{E}_{s_{c} \sim \mathcal{D}, a_t^{1:L} \sim \pi(\cdot|s_c)} \left[\sum_{i=1}^L A(s_{c}, a_t^{1:L}) \log \pi_{\phi}(a_t^i| s_{c}, a_t^{1:i-1}) \right].\numberthis 
\end{gather}

\vspace{-0.2cm}
\subsection{Other Practical ArCHer Algorithms} 
\label{sec:other.practical}
\vspace{-0.2cm}

The instantiation of ArCHer described in Section \ref{section: practical} is simple and ready-to-use, but the flexibility of the ArCHer framework also enables it to incorporate other components that prior works found to be successful. 
We describe two variants in the two paragraphs to follow. The first improves the sample efficiency while interacting with the external environment and the second enables ArCHer to learn entirely from a dataset of pre-collected experience.

\textbf{Improvements to token-level policy gradient. }\label{para: baseline} In applications where the horizon of the token-level MDP is long (i.e. each utterance has many tokens), despite the use of advantage values (instead of Q-model predictions directly), the REINFORCE estimator corresponding to Equation~\ref{equation: reinforce} can struggle to improve the policy due to a high variance in token-level reward \citep{Schulman2015HighDimensionalCC}.
This variance can be reduced by introducing a baseline value function, $\widetilde{V}_{\eta}(\widetilde{\pi})$ parameterized by $\eta$, in the token-level MDP. For simplicity, we opt to train this token-level baseline via supervised regression onto Monte-Carlo return estimates (with a discount factor of $1.0$) in the token-level MDP as shown in Equation~\ref{equation: baseline}, though Bellman backups can also be employed in the low-level MDP to estimate it:
\begin{gather}\label{equation: baseline}
    J_{\eta}(\widetilde{V}) = \EE_{s_{c} \sim \Dcal, a_t^{1:L} \sim \pi(\cdot|s_c)} \left[\sum_{i=1}^L \left(A(s_{c}, a_t^{1:L}) - \widetilde{V}_{\eta}(s_c, a_t^{1:i-1})\right)^2\right]. \numberthis 
\end{gather}
Incorporating this token-level baseline, the new objective for the actor given by Equation \ref{equation: reinforce with baseline}:
\begin{align*}
    J_\phi(\pi) &= \EE_{s_{c} \sim \Dcal, a_t^{1:L} \sim \pi(\cdot|s_c)} \left[ \sum_{i=1}^L  \left(A(s_{c}, a_t^{1:L}) - \widetilde{V}_{\eta}(s_c, a_t^{1:i-1}) \right) \cdot \log \pi_{\phi}(a_t^i| s_{c}, a_t^{1:i-1}) \right]. \numberthis \label{equation: reinforce with baseline}
\end{align*}

\textbf{Offline RL training with ArCHer.} ArCHer can also learn from a dataset of pre-collected experience without any online interaction.
A distinguishing aspect of the offline setting is that improving the policy normally results in selecting out-of-distribution actions~\citep{kumar2019stabilizing}, whose $Q$ values are difficult to estimate accurately given only the offline dataset. As a result, directly optimizing the $Q$ values, as in the online setting, often results in severe overestimation, divergence, and poor policy performance. This suggests that we need to adopt different objective functions in this offline setting. One concrete instantiation is to utilize the implicit Q-learning (IQL)~\citep{iql} algorithm for obtaining backup targets for the utterance-level critic restricted to in-support actions, and the AWR~\citep{awr} algorithm for imposing a penalty on deviating far away from the data on the actor. While our offline RL experiments utilize these design choices, one could also utilize other techniques such as explicitly regularizing the critic's predictions~\citep{cql} or imposing a behavioral cloning loss on the actor~\citep{fujimoto2021minimalist}.

The IQL loss aims to derive a version of the TD error that aims to inherit characteristics of the Bellman optimality operator but without performing an explicit maximization over the actions, by instead regressing Q-functions towards a higher expectile of possible target values at the next state. 
For a given expectile parameter $\tau \in [0.5, 1)$, the IQL loss is given by the following:
\begin{align}
    \!\!\!\!J^{\text{IQL}}_\psi(V) = \EE_{s \sim \Dcal}[\EE_{a \sim \pi_\phi(\cdot|s)}[L^\tau_2(V_\psi(s) - Q_{\bar{\theta}}(s,a))]], \label{equation: iql}
\end{align}
where \mbox{$L^\tau_2(u) = |\tau - \mathbf{1}\{u < 0\}|u^2$}.
See the paper \cite{iql} for more details.

\def\invtemp{\ensuremath{\beta}}
\def\expectile{\ensuremath{\tau}}
\def\archer{ArCHer}

The policy extracted by AWR~\citep{awr} trades off between
searching for high-return policies and imitating the  policy that generated the dataset, by minimizing the loss
\begin{gather}\label{equation: awr}
        J_\phi(\pi) = -\EE_{(s_{c}, a_t^{1:L}) \sim \Dcal} \left[\exp\left({\invtemp \cdot A(s_{c}, a_t^{1:L})}\right) \cdot  \sum_{i=1}^L \log \pi_\phi(a_t^i| s_{c}, a_t^{1:i-1}) \right].
    \numberthis
\end{gather}
The tradeoff is controlled by
a positive, user-defined scalar value \invtemp.
Low values for \invtemp{} encourage imitating the policy that generated the dataset. 
Large values of \invtemp{} lead to more aggressive maximization of rewards, but potentially at the cost of stability~\citep{awr}.

\subsection{Framework Summary and Practical Implementation Details}
\textbf{Pseudocode.} The algorithms derived from the ArCHer framework so far are summarized in Algorithm \ref{alg:archer_detail}. These algorithms can operate in either offline or online mode (Line 4), and can utilize a variety of objectives for training the utterance-level Q- and V-models (Lines 9-15) as well the token-level policy (Line 20). Optionally, a token-level baseline value function may also be utilized (Line 17). 

\definecolor{darkgreen}{rgb}{0, 0.5, 0}
\begin{algorithm}[t]
\caption{ArCHer: Practical Framework}
\label{alg:archer_detail}
\begin{algorithmic}[1]
\State Initialize parameters $\phi, \psi, \theta, \bar{\theta}, \text{(Optionally) } \eta$
\State Initialize replay buffer $\mathcal{D}$ (optionally from an offline dataset).
\For{each iteration}
\State \textcolor{darkgreen}{\#\# Data Collection.}\Comment[only online mode] 
\For{each environment step}
\State Execute $a_t \sim \pi_\phi(\cdot|s_t)$ , obtain the next state $s_{t+1}$, add to buffer $\Dcal$.
\EndFor
\For{each critic step}
\State \textcolor{darkgreen}{\#\# Update utterance-level Q and V functions by target function bootstrapping.}
\State $\theta \leftarrow \theta - \nabla J_\theta(Q)$ \Comment{Equation~\ref{equation: JQ}}
\State $\psi \leftarrow \psi - \nabla J_\psi(V)$ \Comment{Equation~\ref{equation: JV} or ~\ref{equation: iql}}
\State \textcolor{darkgreen}{\#\# Update target Q and V functions.}
\State $\bar{\theta} \leftarrow (1 - \tau)\bar{\theta} + \tau\theta$
\State $\bar{\psi} \leftarrow (1 - \tau)\bar{\psi} + \tau\psi$
\EndFor
\State \textcolor{darkgreen}{\#\# Update token-level baseline by MC regression.}
\For{each baseline step}
\State $\eta \leftarrow \eta - \nabla J_{\eta}(\widetilde{V})$ \Comment{(Optionally), Equation~\ref{equation: baseline}}
\EndFor
\State \textcolor{darkgreen}{\#\# Update token-level actor with utterance-level critic.}
\For{each actor step}
\State $\phi \leftarrow \phi - \nabla J_\phi(\pi)$ \Comment{Equation~\ref{equation: reinforce}, \ref{equation: reinforce with baseline}, or \ref{equation: awr}}
\EndFor
\EndFor
\end{algorithmic}
\end{algorithm}

\textbf{Implementation details.} \label{para:implementation_details}In our main experiments, we use a GPT-2~\citep{gpt2} architecture for parameterizing the policy, 
and a RoBERTa-base model~\citep{roberta} with a linear layer on top of the embeddings corresponding to the ``[CLS]'' token for obtaining the critic's predictions. To address the issue of overestimation of Q-values, we also employ the double Q-learning trick \citep{doubleq} and train two copies of Q- and V-models, $\left\{Q_1, V_1\right\}$ and $\{Q_2, V_2\}$, independently. The advantage value is calculated by using a minimum over $Q_1, Q_2$ and $V_1, V_2$.

To save computation and memory costs, $Q_1, Q_2, V_1, V_2$ share the same language model encoder backbone with separate MLP heads. The parameters of the token-level actor are independent from the critic. When utilized, the token-level value baseline is parameterized by a separate GPT2 architecture with a MLP layer on top of the hidden states for each token. Additional details and hyperparameters for our approach are provided in Appendix \ref{app: hyperparameters}. 
\vspace{-0.2cm}
\section{Theoretical Analysis}
\vspace{-0.2cm}

We will present empirical results showing the effectiveness of ArCHer in Section \ref{section: Experiments}, but in this section we will first highlight some theoretical and conceptual benefits of our hierarchical design. An important difference between ArCHer and prior token-level RL algorithms such as ILQL~\citep{ilql} is the high-level critic. Thus, we aim to understand the impact of estimation errors in this high-level critic on the token-level policy in contrast with the impact of estimation errors in a token-level critic. While our proof techniques can be easily applied to general high-level and low-level MDPs, we focus on analyzing the specific contrast of utterance and token-level critics to be concrete.

\textbf{Conditions for convergence.} To start, we show that a hierarchical RL design prescribed by ArCHer requires substantially weaker conditions for algorithm convergence compared to off-policy token-level methods. These conditions pertain to \textbf{(1)} the capacity of the function class representing the critic (i.e., Bellman completeness~\citep{song2023hybrid, zhou2023offline, zanette2023realizability, xie2021bellman}), and \textbf{(2)} the coverage of off-policy data as measured by the density ratio~\citep{zhan2022offline, foster2021offline}, following the practice standard in RL theory. For \textbf{(1)}, we show in Lemma~\ref{lem: bellman_complete} that satisfying the Bellman completeness~\citep{song2023hybrid, zhou2023offline, zanette2023realizability, xie2021bellman} condition
imposes weaker requirements on the function class used to model the critic at the utterance-level (as is the case with ArCHer) as opposed to the token level. Intuitively, this is because a function class representing the token-level critic must exhibit flexibility to realize arbitrary functions at the next token, which would require higher capacity compared to a utterance-level critic that only needs to be able to realize arbitrary functions at the coarser time-scale of utterances. 

For \textbf{(2)}, we show in Lemma~\ref{lem: density_ratio} that the density ratio condition~\citep{zhan2022offline, foster2021offline} imposes identical requirements on the coverage of the offline data for token-level and utterance-level critic, despite a larger space of possible utterances (i.e., actions in the higher-level MDP). Intuitively, this is because the a given offline dataset induces the same trajectory distribution at both the utterance and token levels. In other words, if a trajectory is covered by the offline data at the utterance level, it is also covered by the offline data at the token level, and vice versa.

\newcommand{\az}[1]{{[\color{blue} AZ: #1]}}
\textbf{Statistical error analysis with finite samples.} With the convergence conditions discussed above, we are able to establish an analysis of the statistical error incurred in estimating the advantage estimates using the utterance-level and token-level critics. Intuitively, this means that an utterance-level critic provides a much more correct signal for improving the policy. We state the conclusions of the theorem below, and refer interested readers to Appendix~\ref{proofs} for formal definitions and proof.
\begin{theorem}[Main Theorem; Informal]\label{lem:main_theorem}
    For an utterance-level MDP with discount factor $\gamma$, where $L$ is the maximum length of each utterance, suppose utterance-level Assumption \ref{def:inherent_BE_error} and \ref{def:offline_coverage} holds, let $f$ be the final Q-function returned by fitted policy evaluation formalized in Algorithm \ref{alg:FPE} at the utterance level, $f$ yields a suboptimality gap of
    \begin{align*}
        &\EE_{s,a \sim d^\pi}\left[\left((\bar{f}(s,a) - \EE_{a' \sim \pi(\cdot|s)}[\bar{f}(s,a)]) - A^\pi(s,a) \right)^2 \right] \le \frac{1}{\gamma L^{1/2}}\Ocal \left(\frac{1}{(1-\gamma)(1 - \gamma^{1/L})L^{1/2}}\left(\epsilon_{stat} + \sqrt{\epsilon_{stat}}\right) \right).
    \end{align*}
    For an equivalent token-level MDP with discount factor $\gamma^{1/L}$, suppose token-level Assumption \ref{def:inherent_BE_error} and \ref{def:offline_coverage} holds, let $f$ be the final Q function returned by Fitted Policy Evaluation formalized in Algorithm \ref{alg:FPE} at the token level, $f$ yields a suboptimality gap of
    \begin{align*}
        &\EE_{s,a \sim \widetilde{d}^\pi}\left[\left((\bar{f}(s,a) - \EE_{a' \sim \pi(\cdot|s)}[\bar{f}(s,a)]) - \widetilde{A}^\pi(s,a)\right)^2 \right] \le \Ocal \left(\frac{1}{(1-\gamma)(1 - \gamma^{1/L})L^{1/2}}\left(\epsilon_{stat} + \sqrt{\epsilon_{stat}}\right)\right),
    \end{align*}
    where $\epsilon_{stat}$ is the statistical error, proportional to $N^{-1/2}$ ($N$ is the number of utterance-level transitions). 

\end{theorem}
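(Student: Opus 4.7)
The plan is to prove the theorem by invoking a standard Fitted Policy Evaluation (FPE) error bound at both timescales, then converting the Q-function estimation error into an advantage estimation error. The two statements in the theorem share exactly the same analytic skeleton, only specialized to a different effective discount and to a different number of samples.

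First, I would establish (or invoke from the offline RL literature, in the style of Munos--Szepesv\'ari or Xie et al.) a generic black-box guarantee for Algorithm~\ref{alg:FPE}: for any MDP with effective discount $\gamma_{\text{eff}}$, a function class satisfying the inherent Bellman error assumption \ref{def:inherent_BE_error} with error $\epsilon_{\text{BE}}$, and a dataset of $N_{\text{eff}}$ transitions satisfying the density-ratio coverage assumption \ref{def:offline_coverage} with coefficient $C$, the output $\bar f$ of FPE satisfies, with high probability,
$$\mathbb{E}_{s,a\sim d^\pi}\bigl[(\bar f(s,a) - Q^\pi(s,a))^2\bigr] \;\lesssim\; \frac{C}{(1-\gamma_{\text{eff}})^2}\bigl(\epsilon_{\text{BE}} + \epsilon_{\text{stat}}(N_{\text{eff}})\bigr),$$
where $\epsilon_{\text{stat}}(N_{\text{eff}})=\widetilde O(N_{\text{eff}}^{-1/2})$ is the per-iteration regression error. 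The two drivers of the bound are (i) a one-step concentration argument for least-squares regression onto the Bellman target, and (ii) geometric propagation of per-iteration errors through the Bellman operator, which is what produces the $1/(1-\gamma_{\text{eff}})^2$ horizon factor.

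Next, I would specialize: at the utterance level take $\gamma_{\text{eff}}=\gamma$ and $N_{\text{eff}}=N$; at the token level take $\gamma_{\text{eff}}=\gamma^{1/L}$ and $N_{\text{eff}}=NL$ (since each of the $N$ utterance transitions contains $L$ token transitions). Lemmas~\ref{lem: bellman_complete} and~\ref{lem: density_ratio} are invoked here to ensure that the inherent Bellman error $\epsilon_{\text{BE}}$ and the coverage constant $C$ can be taken to match across the two scales, so the two bounds differ only in $(1-\gamma_{\text{eff}})$ and in the effective sample count. Then I would convert Q-function error into advantage error via
$$\bigl(\bar f(s,a)-\mathbb{E}_{a'\sim\pi}[\bar f(s,a')]\bigr)-A^\pi(s,a) \;=\; [\bar f-Q^\pi](s,a) - \mathbb{E}_{a'\sim\pi}\bigl[(\bar f-Q^\pi)(s,a')\bigr],$$
and apply Cauchy--Schwarz together with a change-of-measure argument controlled by $C$ to bound the expected squared advantage error by the expected squared Q-function error. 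The mixed $\epsilon_{\text{stat}}+\sqrt{\epsilon_{\text{stat}}}$ appearing in the theorem will come out of this Cauchy--Schwarz step, with the $\sqrt{\epsilon_{\text{stat}}}$ being the cross term that dominates when $\epsilon_{\text{stat}}$ is small.

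The main obstacle is the precise $L$-bookkeeping required to produce the asymmetric factor $1/(\gamma L^{1/2})$ that distinguishes the utterance-level from the token-level bound. The naive horizon comparison $(1-\gamma)^{-2}$ vs $(1-\gamma^{1/L})^{-2}$ already gives roughly $L^2$ savings at the utterance level, but this is partially offset by the fact that the token-level analysis benefits from $L$ times more transitions (an $\sqrt{L}$ statistical gain) and by additional $L$-factors that appear when pushing the Q-error bound through the Q-to-advantage conversion. Rather than simplifying each bound in its own natural form, one must massage both into the common shape $\frac{1}{(1-\gamma)(1-\gamma^{1/L})L^{1/2}}(\epsilon_{\text{stat}}+\sqrt{\epsilon_{\text{stat}}})$ so that the comparison reduces transparently to the single prefactor $\frac{1}{\gamma L^{1/2}}$; getting every power of $L$ and $(1-\gamma)$ in the right place is where the bulk of the technical work lies.
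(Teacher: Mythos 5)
Your overall skeleton (FPE guarantee $\to$ Q-error under $d^\pi$ $\to$ advantage error, then specialize with effective discount $\gamma$ vs.\ $\gamma^{1/L}$ and effective sample count $N$ vs.\ $NL$) matches the paper, but two quantitative mechanisms that actually produce the stated bounds are missing or misattributed, and without them the bounds you would obtain do not match the theorem. First, your black-box guarantee $\mathbb{E}_{d^\pi}[(\bar f-Q^\pi)^2]\lesssim \frac{C}{(1-\gamma_{\mathrm{eff}})^2}(\epsilon_{\mathrm{BE}}+\epsilon_{\mathrm{stat}})$ has the wrong horizon dependence. The paper's Lemma \ref{lem:guarantees_fpe} bounds the \emph{squared Bellman residual} of $\bar f$ under $d^\pi$ by $\epsilon_{stat}$, and then converts residual to Q-error by writing $\bar f-Q^\pi=(\bar f-\mathcal{T}^\pi\bar f)+(\mathcal{T}^\pi\bar f-Q^\pi)$, expanding the square, and using the step-back lemma (Lemma \ref{lem:distribution_step_back}, $\mathbb{E}_{d^\pi}\mathbb{E}_{P,\pi}[g]\le \tfrac{1}{\gamma}\mathbb{E}_{d^\pi}[g]$) to get the self-bounding recursion $\mathbb{E}_{d^\pi}(\bar f-Q^\pi)^2\le \epsilon_{stat}+2(R+1)\sqrt{\epsilon_{stat}}+\gamma\,\mathbb{E}_{d^\pi}(\bar f-Q^\pi)^2$, i.e.\ a \emph{single} $\frac{1}{1-\gamma_{\mathrm{eff}}}$ factor for the squared error. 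This is exactly what makes the final comparison come out to $\gamma\sqrt{L}$: with your quadratic factor the token-level bound would be worse than the stated one by roughly a factor of $L$ (since $\frac{1}{(1-\gamma^{1/L})^2}\approx \frac{L}{(1-\gamma)(1-\gamma^{1/L})}$), so you would prove neither the token-level display nor the $\frac{1}{\gamma L^{1/2}}$ prefactor. The cross term $2(R+1)\sqrt{\epsilon_{stat}}$ in that recursion, together with $R+1\le\frac{1}{1-\gamma}+1$, is also the true origin of the $\epsilon_{stat}+\sqrt{\epsilon_{stat}}$ shape and of the extra $\frac{1}{1-\gamma}$ in the final bound — not the Q-to-advantage conversion, as you claim.

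Second, the Q-to-advantage step is much simpler than you describe: since $d^\pi(s,a)=d^\pi(s)\pi(a|s)$, writing $(\bar f-\mathbb{E}_{a'\sim\pi}\bar f)-A^\pi=(\bar f-Q^\pi)-\mathbb{E}_{a'\sim\pi}[(\bar f-Q^\pi)]$ and using $(x-y)^2\le 2x^2+2y^2$ plus Jensen gives a factor-$4$ bound with no change of measure, no Cauchy--Schwarz cross term, and no $L$-factors; so your worry about ``additional $L$-factors in the Q-to-advantage conversion'' is unfounded, and your proposed source of the $\sqrt{\epsilon_{stat}}$ term does not exist in this step. Finally, the device that puts the two bounds in a common, comparable shape is Lemma \ref{lem:gamma}, $\frac{1}{1-\gamma}\le\frac{1}{\gamma L(1-\gamma^{1/L})}$, applied to the utterance-level bound; you correctly sense that a common normalization is needed, but without the linear-in-horizon propagation above and this elementary inequality, the bookkeeping you outline cannot reproduce the asymmetric $\frac{1}{\gamma L^{1/2}}$ factor in the statement.
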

Informally, Theorem \ref{lem:main_theorem} shows that the error in estimating advantages using the token-level critic is $\gamma \sqrt{L}$ larger than the the utterance-level critic (in the worst case), where $L$ is the maximum number of tokens in each utterance, due to error accumulation. In practice, a common choice for $\gamma$ is greater than $0.95$ while $L$ can be as large as $64$ tokens, resulting in $\gamma L^{1/2} >> 1$. Therefore, we have not only shown  that a hierarchical design requires weaker conditions for convergence, but it also enjoys improved guarantees on the statistical error, resulting in a more accurate estimate of policy gradient for improving the policy in the worst case.

\vspace{-0.2cm}
\section{Experiments}\label{section: Experiments}
\vspace{-0.2cm}

The goal of our experiments is to evaluate the efficacy of hierarchical RL algorithms derived from ArCHer. Specifically, we aim to answer the following questions: \textbf{(1)} Is ArCHer able to achieve better sample complexity and performance than prior on-policy and off-policy RL methods for LLMs? \textbf{(2)} Does the TD-learning design for the utterance-level critic enable an effective use of off-policy data? \textbf{(3)} How does the performance of ArCHer scale with larger base models (such as Mistral 7B~\citep{jiang2023mistral})? \textbf{(4)} How do different practical algorithms derived from our ArCHer framework compare? To answer these questions, we will present an extensive empirical evaluation of ArCHer and several prior methods on a suite of environments encompassing natural language games, navigation problems posed in natural language, and interaction with the web.

\vspace{-0.25cm}
\subsection{Tasks and Environments}
\vspace{-0.2cm}

To stress-test the efficacy of ArCHer, we need environments and task setups that satisfy several desiderata. First, the chosen tasks must require strategic multi-step planning and reasoning under delayed rewards, and cannot be solved in one turn. We also want these tasks to require LLMs to generate coherent natural language and keep the task realistics.

Next, we want these tasks to be solvable by models of upto 7 billion parameter scale, which corresponds to the upper end of our computational budget. Finally, the chosen tasks should support fast and reliable evaluations, for reproducibility and benchmarking. Most existing LLM agent tasks such as those which require interacting with terminals, operating systems, and databases~\citep{intercode, agentbench} require larger base models (typically larger than 7B) for obtaining non-trivial success rates and can often be solved in a single step. This makes these tasks unfavorable for fast iteration within our compute budget. Other dialogue and tutoring  tasks~\citep{hong2023zeroshot, chai, convlab2, convlab, multiwoz} require either costly user studies or evaluate using metrics that do not directly represent task performance, making them unfavorable for stress-testing our approach. Therefore, we utilize a different set of tasks for our evaluations.
 
Concretely, we consider the following tasks: \textbf{(1)} \textbf{Detective Game}~\citep{textscifi}, an interactive text fiction game where the agent must generate a sequence of natural language actions (e.g.,``take paper'', ``eat apple'') based on environment feedback. A reward is given if the agent reaches some milestones towards finishing the game, where the end goal is to successfully find the murderer in a murder mystery; \textbf{(2)}\textbf{Twenty Questions}~\citep{lmrl}, a dialogue task where the agent plays the role of a guesser trying to guess a hidden word from a list of 157 words within twenty yes/no questions. The oracle answers the questions with ``Yes.'', ``No.'', or ``Invalid Question.'' The oracle is simulated with a ``flan-t5-small''~\citep{chung2022scaling} model trained with supervised fine-tuning on the dataset provided by \citet{lmrl}. Upon guessing the correct word, the agent receives a reward of 0 and the environment terminates. Otherwise, a reward of -1 is provided at each time step. We also study a variation of this task with a list of only 10 possible underlying words, that we call \textbf{Twenty Questions Subset}. This variant challenges the algorithms to tailor a very specific strategy when there is a shortcut in the task;
\textbf{(3)} \textbf{Guess My City}~\citep{lmrl}, a similar multi-turn task where the agent attempts to guess the name of a hidden city from a list of 100 cities within twenty questions. A crucial difference between the Guess My City task and the Twenty Questions task is that the guesser is now allowed to ask any question and can observe free-form responses (which are not necessarily ``Yes'' or ``No''); \textbf{(4)} \textbf{WebShop}~\citep{webshop}, a tool-use task where the agent is instructed to buy an item from a shopping server. A dense reward between 0 and 1 is provided based on the similarity of the item purchased and the item requested. See Appendix~\ref{app: dataset_details} for more details. These tasks require planning over long horizons, allow non-trivial success rates within the 7 billion parameter scale, and come equipped with reproducible and task-directed evaluation metrics.

\vspace{-0.2cm}
\subsection{Comparisons and Baseline Approaches}
\label{sec:online_comparisons}
\vspace{-0.2cm}

We compare our method to a number of alternative RL approaches.
For token-level methods, we consider token-level \textbf{PPO} \citep{ppo} due to its state-of-the-art performance with LLMs \citep{Ouyang2022TrainingLM, Ramamurthy2022IsRL}. For each iteration, \textbf{PPO} collects new on-policy data by rolling out the current actor and uses these data to estimate the policy gradient. Perhaps most importantly, data collected by previous policies is simply discarded for later updates. 
We use the existing PPO implementation by \citet{lmrl}. 
We also implemented a \textbf{token-level DQN}~\citep{dqn} method, but were unable to get it to attain non-zero task performance. Neither did we find any prior work evaluating this method, and hence we omit it from our results.

We also considered a non-RL approach based on filtered behavioral cloning (BC), denoted as \textbf{Filtered BC}. As prior work~\citep{lmrl,ilql} shows, perhaps surprisingly, this simple baseline often attains competitive or better performance than RL approaches for LLMs, implying that outperforming filtered BC is a hallmark of proper functioning of the ``RL component'' in an approach. Our implementation of filtered BC maintains a fixed size buffer of recent rollouts and trains the actor with an imitation learning loss on the top 10\% rollouts, as identified based on the totaltask reward. 

For utterance-level methods, we compare with a state-of-the-art
utterance-level RL method, \textbf{CHAI}~\citep{chai}. CHAI was designed for offline RL specifically. To extend it to learn from online rollouts, we simply replace the pessimistic loss function (i.e., a conservative Q-learning~\citep{cql} loss) in this approach with a standard TD-learning loss function on data sampled from the off-policy replay buffer, identical to the one used in ArCHer. That said, the key difference is that CHAI utilizes a frozen actor obtained by behavioral cloning (or supervised fine-tuning) on the replay buffer, whereas ArCHer optimizes the actor as well. Each time when an action needs to be sampled, $k$ utterances are sampled from the frozen actor. The utterance-level critic in CHAI ranks these $k$ utterances and chooses the utterance with the highest Q value. A larger value of $k$ would likely lead to better performance, but is also computationally expensive for this method. To obtain a sweet spot, we instantiate CHAI with $k=5$ to effectively balance computational overhead and performance (\textbf{note that $k=5$ already results in a runtime of about 4 times longer than our approach for this prior method}; going beyond would be prohibitive for our computational resources). 
Model architectures, learning rates, and other algorithm-agnostic details are kept identical between all prior methods and ArCHer (see Section \ref{para:implementation_details}). 

Finally, akin to single-turn RL fine-tuning of LLMs, we initialize the token-level policy for all methods with a policy checkpoint obtained by running supervised instruction tuning on sub-optimal data for the task (see Appendix \ref{app: dataset_details} for how this sub-optimal data is generated). Uniformly across all methods, this initialization enables effective exploration at the beginning of RL fine-tuning.

\begin{figure*}[h!]  
    \centering
    \includegraphics[width=1.\linewidth]{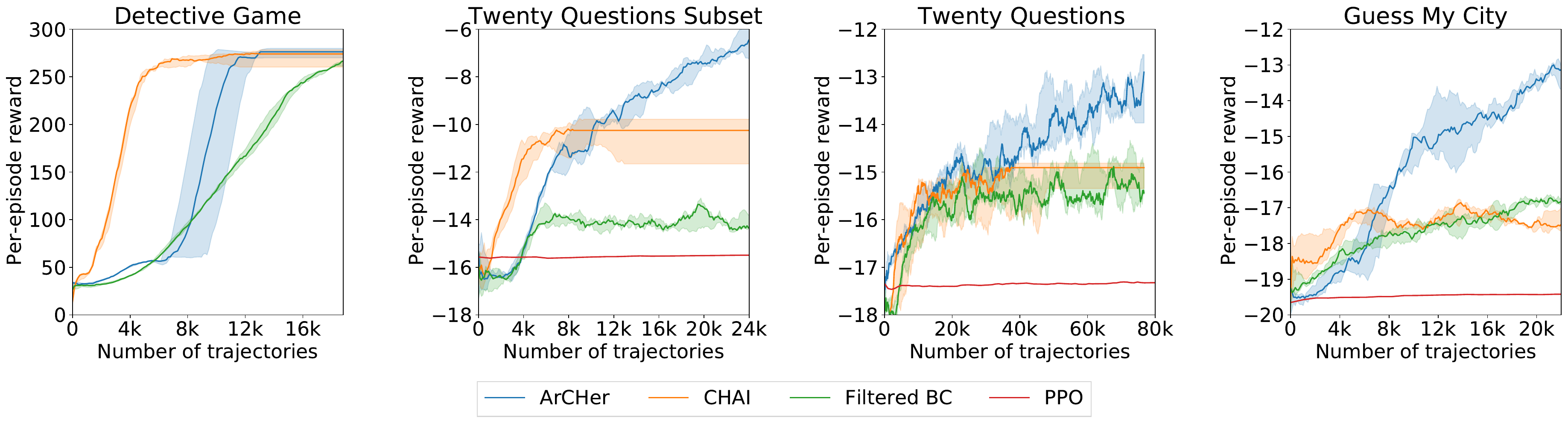}
    \caption{\footnotesize{\textbf{Online RL results} comparing ArCHer and other approaches on four tasks. We plot the median performance of each method across three seeds. Observe that ArCHer steadily improves the policy, outperforming all other methods on three tasks and matching the best prior approach on the simple Detective Game task. While PPO appears to not be learning, by zooming into the learning curve in Figure~\ref{fig: ppo}, we find that PPO still gradually improves but at a very slow speed.}}
    \label{fig: main_online}
    \vspace*{-10pt}
\end{figure*}

\begin{figure}
  \centering
  \includegraphics[width=0.4
\textwidth]{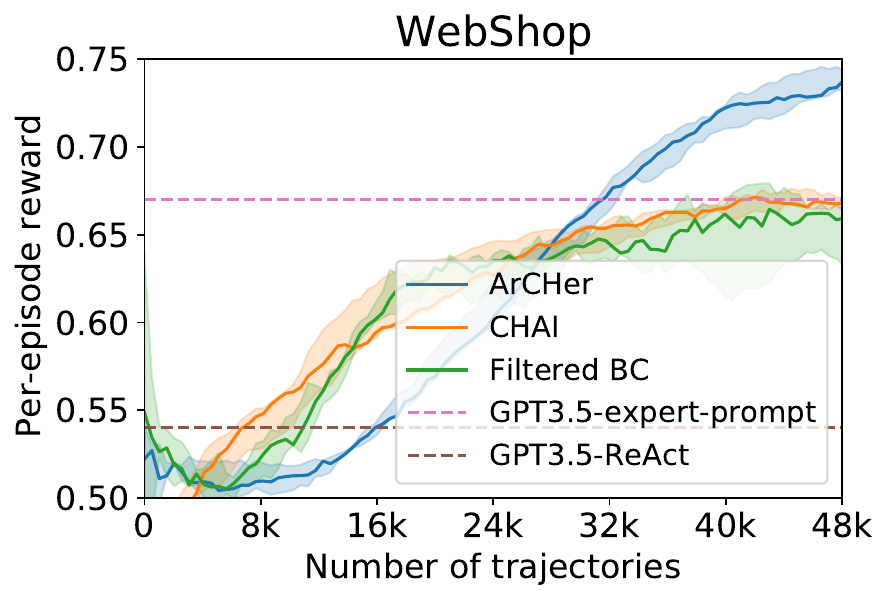}
  \caption{\footnotesize{\textbf{Webshop results.} Observe that fine-tuning a GPT2 base model with ArCHer outperforms prior approaches, filtered BC and CHAI, and is the only approach to outperform GPT 3.5 equipped with several effective prompting strategies.}}\label{fig: webshop} 
  \vspace{-0.2cm}
\end{figure}

\vspace{-0.2cm}
\subsection{Results: Sample Efficiency in the Online Setting}
\vspace{-0.2cm}

Figure \ref{fig: main_online} and \ref{fig: webshop} show the comparison between ArCHer with other methods across the five tasks. We also provide some example rollouts of ArCHer for each environment in Appendix \ref{app: hacking}. 
Overall, we found that ArCHer converges to a much better performance than all other prior methods on the four harder tasks that require identifying hidden information or present diverse initial states (i.e., Twenty Questions Subset, Twenty Questions, Guess My City, and WebShop).
In fact, on WebShop, online RL training of GPT2 base model via ArCHer outperforms several effective prompting strategies (i.e., an expert-written prompt and ReAct~\citep{yao2023react}) applied on top of GPT-3.5, a strong LLM. 

First of all, we found that token-level PPO fails to achieve performance competitive with all other off-policy methods using the same amount of data. This is perhaps unsurprising, as PPO is an on-policy method, and therefore, cannot effectively reuse samples from past iterations. In particular, on the \textbf{Twenty Questions} task,
we observed that PPO could only stably improve when provided with at least 1024 on-policy rollouts for each gradient step, likely because of high gradient variance. This observation corroborates the finding of \citet{lmrl}, suggesting that online PPO is less practical for this task. Quantitatively, we find that while it takes more than 100k samples for PPO to attain an average return just higher than -17 (see Figure~\ref{fig: ppo} in the appendix), ArCHer attains this reward value with fewer than 1000 samples, implying at least a \textbf{100x boost} in sample efficiency.

While filtered BC generally converges very quickly, the resulting policy often performs suboptimally and does not improve with more data in Figure~\ref{fig: main_online}. On the other hand, ArCHer enjoys steady policy improvement as it collects more samples. Finally, we observed that while CHAI improved at a faster than ArCHer initially, it often converged to a worse final performance. We suspect this is because the critic in CHAI is directly used to rerank samples from a frozen behavior policy, which only enables a narrow margin for policy improvement. On the contrary, ArCHer needs an initial learning phase to reduce critic estimation error, after which it can improve steadily. 
\vspace{-0.2cm}
\subsection{Ablation Study: Importance of Off-Policy Data}
\vspace{-0.2cm}

In Figure \ref{fig: token_baseline} (b), we investigate the importance of off-policy data by varying the size of replay buffer on the \textbf{Guess My City} task. A smaller replay buffer means that updates rely on repeatedly sampling on-policy data. In our experiments, we observed that using a replay buffer containing only the most recent 48 rollouts resulted in unstable learning, likely due to overfitting on limited data, which has been observed in standard RL problems outside of LLMs~\citep{nikishin2022primacy}. On the other hand, larger buffers are more stable. However, increasing the size of the buffer beyond a certain point is benign, resulting in no meaningful changes to performance. Overall, this means that making use of off-policy data can improve the stability and performance of practical methods.

\begin{figure}[ht]
    \centering
    \includegraphics[width=0.99\linewidth]{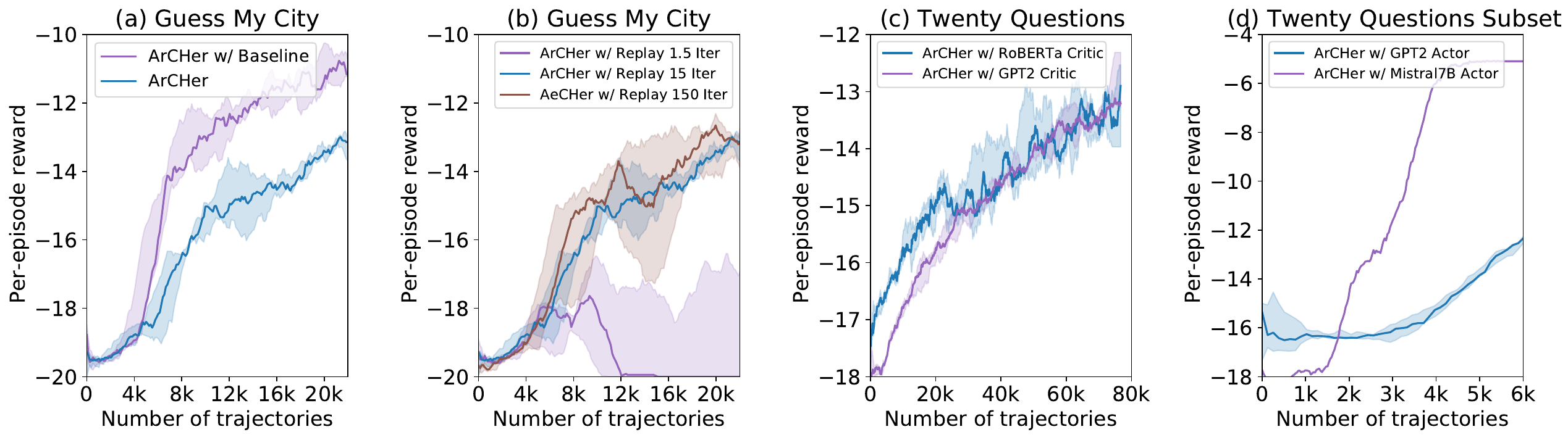}
    \vspace{0.05cm}
    \caption{\footnotesize{\textbf{(a)} Ablation study of the token-level baseline on Guess My City. \textbf{(b)} Ablation study of the importance of off-policy data by varying the size of the replay buffer on Guess My City. \textbf{(c)} Ablation study of changing the base model for critic from encoder-only RoBERTa to autoregressive decoder-only GPT2 on Twenty Questions. \textbf{(d)} Ablation study of scaling the base model for the actor from GPT2 to Mistral7B on Twenty Questions Subset.}}
    \label{fig: token_baseline}
    \vspace{-0.2cm}
\end{figure}

\vspace{-0.2cm}
\subsection{Ablation Study: Alternate Base Models for the High-Level Critic in ArCHer}
\vspace{-0.2cm}

In Figure \ref{fig: token_baseline} (c), we carried out an ablation of changing the architecture for the critic model from an encoder-only RoBERTa~\citep{roberta} to an autoregressive decoder-only GPT2~\citep{gpt2}, where we took the embedding of the last ``[EOS]'' token as the embedding of the utterances. Observe that although ArCHer w/ RoBERTa critic learns a bit faster in the beginning, learning curves for both of these critic models behave identically past a certain number of initial samples. Therefore, ArCHer can also use decoder-only transformer models, with no loss in performance.

\vspace{-0.2cm}
\subsection{Ablation Study: Scaling the Base Model from 100M to 7B Parameters}
\vspace{-0.2cm}
In Figure \ref{fig: token_baseline} (d), we replaced the 100 million parameter GPT-2 model used to represent the token-level actor in ArCHer with a 7 billion parameter Mistral model~\citep{jiang2023mistral}. When using this 7B model, we did not need to apply supervised fine-tuning since the open-source checkpoint already attained non-trivial rewards on \textbf{Twenty Questions Subset} when evaluated zero-shot. Observe in Figure~\ref{fig: token_baseline} (d), that ArCHer with this Mistral7B actor learns to solve the task much faster than ArCHer with a GPT2 Actor. This indicates that our ArCHer framework can scale well with LLMs with more parameters. More broadly, due to similarities between the token-level actor update in ArCHer and single-turn RL fine-tuning for LLMs in RLHF, we would expect performance to exhibit similar benefits from scaling the model size for the policy~\citep{gao2023scaling}. This finding corroborates this hypothesis.

\vspace{-0.2cm}
\subsection{Alternate Practical Algorithms Derived from ArCHer}
\vspace{-0.2cm}

\textbf{Online ArCHer with improved policy gradient estimators.} In Figure \ref{fig: token_baseline} (a), we compare the performance of ArCHer with and without a token-level baseline on the \textbf{Guess My City} task (Equation~\ref{equation: reinforce with baseline}). This task requires the utterances of the agent in each turn to be longer and more diverse than other tasks. Observe that incorporating this token-level baseline in ArCHer outperforms standard ArCHer by a large margin, supporting our hypothesis that the introduction of the token-level baseline can effectively reduce the variance of the vanilla policy gradient while updating the token-level actor (especially when the utterances are long and diverse). That said, this improvement requires paying an extra computational overhead associated with training $\tilde{V}_\eta$, which might not be necessary when each utterance is short. Overall, this study illustrates one of the central benefits of our hierarchical design: we can choose the best method for the higher and lower level based on their distinct requirements.

\textbf{Offline ArCHer with IQL and AWR.} We now present a preliminary study of ArCHer in the offline setting, when learning from a static dataset from past environment interactions. Due to computational constraints, we were not able to perform extensive comparisons with the state-of-the-art algorithms; rather, we investigated the effect of several design choices in order to investigate the effect of various design choices in the offline setting, including IQL and AWR losses described in \cref{sec:other.practical}. We also incorporate a baseline, \textbf{BC}, which performs (unfiltered) imitation learning on the offline dataset. Finally, we also ran filtered BC, which only imitates the best trajectories in the offline dataset.

\begin{table}[t]
\begin{center}
\begin{tabular}{ lr}
\toprule
 & Twenty Questions (Return) \\
 \midrule
ArCHer (IQL + AWR)            & $\mathbf{-14.1} $  \\
ArCHer (IQL + REINFORCE)       & $-20$  \\
ArCHer (IQL + REINFORCE + BC)  & $-15.3$\\
ArCHer (SARSA + AWR)  & $-14.5$ \\
\midrule
Filtered BC     & $-15.4 $  \\
BC              & $-16.8 $\\
 \bottomrule
\end{tabular}
\end{center}
\vspace{-0.4cm}
\caption{\footnotesize{\textbf{Variants of ArCHer in the offline RL setting.} The performance of different approaches is evaluated by running 1280 trajectories across 5 random seeds for ArCHer and the Filtered BC approach. Observe that handling out-of-distribution actions by instantiating ArCHer with the IQL and AWR objectives works best in the offline setting.}}
\label{tab:offline}
\vspace{-0.2cm}
\end{table}

In Table~\ref{tab:offline}, we also evaluate several other design choices in the offline setting. Directly borrowing the REINFORCE objective from the online setting (Equation~\ref{equation: reinforce}) results in a quick collapse of performance due to the lack of any regularization to prevent out-of-distribution actions, as is well known in the offline RL problem setting outside of LLMs~\citep{kumar2019stabilizing}. Combining Equation~\ref{equation: reinforce} with an imitation learning loss stabilizes learning and results in a performance improvement, but still underperforms advantage-weighted regression (AWR)~\citep{awr}. Finally, we replaced the IQL in-sample expectile backup with a SARSA backup, where the utterance present in the offline dataset at the next turn is used to compute the Bellman target, i.e., no implicit or explicit maximization over target values is utilized, and the value function is trained to represent the long-term Q-values of the data collection policy. Observe that this variant did not offer the same level of policy improvement as using IQL to train the critic in this setting. This highlights the importance of maximization over actions to calculate Bellman targets in the offline setting. Finally, we also find that instantiations of ArCHer that use IQL and SARSA in conjunction with AWR, both outperform the na\"ive BC and filtered BC, further higlighting the importance of dynamic programming to train the critic.

\textbf{To summarize}, our experiments show that ArCHer can be used to derive multi-turn RL methods that lead to substantial improvements to sample efficiency of LLM policy training, benefit from offline and off-policy experience as well as improvements to RL algorithms, and scale with model capacity.

\vspace{-0.2cm}
\section{Discussion and Conclusion}
\vspace{-0.2cm}

In this paper, we propose a novel Actor-Critic Framework with a Hierarchical Structure (ArCHer) for multi-turn LLM agent tasks. By running two RL algorithms simultaneously, one at the high level (i.e., utterances in our practical method) and one at the low level (i.e., tokens), ArCHer reduces task horizons while enjoying the ability to retain a compact token-level action space at the low level. These characteristics yield a more practical, efficient, and effective method for training LLMs to be effective decision-makers or agents. ArCHer is simple and extensible, and can be flexibly instantiated with a variety of components for both the high- and low-level methods. Empirically, we observed that ArCHer significantly outperforms prior RL methods for LLMs, on a range of online RL tasks and scales favorably with more capable base models and other design improvements. 

Due to the computational constraints associated with running many trials of multiple RL algorithms and ArCHer instantiations, we had to conduct most of our experiments with a relatively small GPT-2 architecture. While our result with a Mistral7B base model demonstrates favorable scaling properties of our approach but rigorously evaluating our method with larger models (and on other benchmarks) is an important direction for future work. Our evaluations also focus entirely on tasks with computational rewards, and our method still requires a significant number of interactions (in the thousands), so an important future direction is to study how such methods can be made feasible to learn from live interactions with humans, when only about 100 interactions are available. We believe that model-based RL approaches could be quite promising here. Finally, deriving and evaluating novel practical algorithms from the hierarchical framework of ArCHer is also an interesting avenue for future work with the potential to greatly improve task performance.

\vspace{-0.2cm}
\section*{Acknowledgements}
\vspace{-0.2cm}
We thank Seohong Park, Mitsuhiko Nakamoto, Joey Hong, Yi Su, and members of the RAIL lab at UC Berkeley for valuable feedback on a previous version of this paper. YZ and AZ thank Marwa Abdulhai, Isadora White, and Charlie Snell for help with debugging and setting up the LMRL benchmark. AK thanks Charlie Snell, George Tucker, and Young Geng for informative discussions and perspectives. This work is supported by NSF IIS-2246811 and ONR N00014-21-1-2838, TRC TPU credit donations from Google Cloud, and compute credit donations from the Center for AI Safety (CAIS). AZ is supported by a postdoctoral fellowship from the Simons Institute for the Theory of Computing.

\vspace{-0.2cm}
\section*{Author Contributions}
\vspace{-0.2cm}
This work was done at UC Berkeley. {Yifei Zhou} led the project. He wrote the prototype of ArCHer, iterated on refining and improving it, implemented the baselines for online comparisons, and different ablation in the online experiments. He also took a lead in writing the paper. {Andrea Zanette} led the offline experiments of the project. He developed the offline variant of ArCHer with IQL and AWR, and helped editing the paper. {Jiayi Pan} set up the environment of webshop for our use and implemented prompting baselines. He also refactored the code and helped set up data distributed parallel training for ArCHer and baselines. {Sergey Levine} advised the project, provided inputs for prototyping the method, and helped editing the paper. {Aviral Kumar} proposed the project idea, advised the project, helped the prototyping of ArCHer, and substantially edited the paper.

\bibliography{arxiv_paper}

\newpage
\appendix
\onecolumn
\part*{Appendices}
\section{Environment and Dataset Details} \label{app: dataset_details}
In this section, we provide more details on the environments and datasets that we used in our experiments. Example actions and observations for each environment are shown in Figure \ref{fig: environment_example}.

\textbf{Detective Game~\citep{textscifi}.} In this game, the agent plays the role of a detective looking into a case where the Mayor got murdered. At each time step, the agent generates a free-form text where the game engine parses the text and determines the next state at each time step. The game engine provides a feedback of ``Illegal Action.'' if the generated text is an illegal action or cannot be correctly parsed. The optimal policy takes 51 steps to solve and reaches a maximum reward of 360. The game timeouts and terminates after 60 steps (including steps where illegal actions are generated). The observation at each time step includes the current surroundings, items carried, environment feedback for the outcome of the last action, and a list of available actions. The Supervised Fine-Tuning (SFT) dataset for this environment consists of 1000 trajectories of an agent picking a random action from the list of available actions at each timestep.

\begin{figure}[h!]  
    \centering
    \includegraphics[width=.75\linewidth]{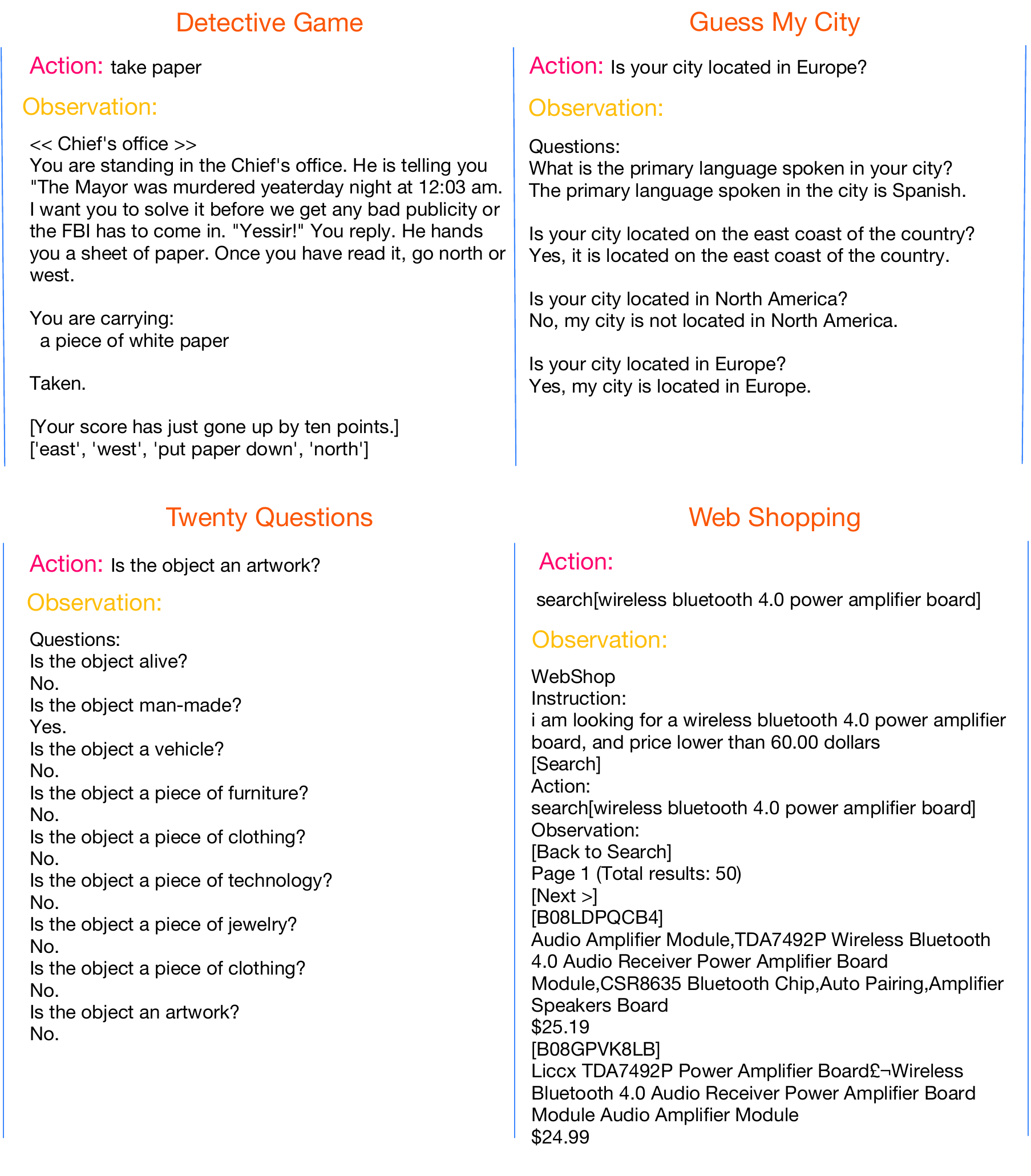}
    \caption{Example actions and observations for each environment that we study in this paper.}
    \label{fig: environment_example}
\end{figure}

\textbf{Twenty Questions and Twenty Questions Subset~\citep{lmrl}.} In this environment, for each episode, a random word is chosen from a list of 157 words of household items such as ``basketball'', ``apple'', and ``car''. The word is held hidden from the agent and the agent is tasked to guess the hidden word within 20 questions. The questions are limited to yes/no questions and the answers from the oracle are limited to ``Yes.'', ``No.'', and ``Invalid Question.''. As opposed to using ``flan-t5-xl''\citep{chung2022scaling} as the oracle \citep{lmrl}, we train a `flan-t5-small'' to simulate the oracle with the same data and use it for our online experiments due to computational constraints. The agent gets a reward of 0 if it guesses the correct word and the episode terminates. Otherwise, the agent gets a reward of -1 for each question it raises. This reward structure results in a minimum reward of -20 if the agent does not guess the corret word with twenty questions and a maximum reward of 0 if the agent guesses the correct word with the first question although it is very unlikely. We use the official offline dataset provided by \citet{lmrl} with 100K simulated episodes. Our SFT checkpoints for online experiments for both Twenty Questions and Twenty Questions Subset are also trained with this dataset. Twenty Questions Subset keeps everything else the same except that it uses a subset of 10 hidden words in the word list. Since the offline dataset and the SFT checkpoint for online experiments are based on the entire Twenty Questions, Twenty Questions Subset challenges different algorithms with a significant distribution shift and requires the agent to come up with an entirely different strategy from behavior cloning. 

\textbf{Guess My City~\citep{lmrl}.} This environment is a similar dialogue task to Twenty Questions. For each episode, a random city is chosen from a list of 100 cities in the world. The city is held hidden from the agent and the agent is tasked to guess the name of the city within 20 questions. Both the questions and answers are free-form except that the answers are not allowed to contain the name of the city. As opposed to using ``flan-t5-xl''\citep{chung2022scaling} as the oracle \citep{lmrl}, we train a `flan-t5-small'' to simulate the oracle with the same data and use it for our online experiments due to computational constraints. We found in our online experiments that the agent can easily learn to ``exploit'' the oracle by tricking it to directly output the name of the city. Therefore, we simply replace the answer with a hardcoded template ``I cannot answer that question.'' if the name of the city is found in the output of the oracle language model to reduce reward hacking. The reward structure is the same as Twenty Questions. We use the official offline dataset provided by \citet{lmrl} with 100K simulated episodes.

\textbf{Web Shopping~\citep{webshop}.} This environment challenges the ability of the agents to interact with external tools. For each episode, a random instruction requesting a specific item is chosen and shown to the agent. The agent needs to make use of a simplified web shopping server to make the purchase. Every successful purchase is consisted of searching the keywords in the search engine, selecting an item from searched results, clicking on features and attributes for the item, and finally making the purchase. Following ReAct~\citep{yao2023react}, the agent can choose to take a ``think'' action before taking any actual actions such as ``search'' and ``click''. An observation consists of the instruction and the history of visited webpages (described in text) and actions. The reward is a scalar between 0 and 1 depending on the similarity of the purchased item with the requested item. For example, a partial reward will be given if the agent purchases a black king-sized bed while a black queen-sized bed is requested. The episode timeouts after 10 interaction steps and a reward of 0 is issued. Our main online environments use a subset of 100 instructions from index 2000 to 2100 for a fast evaluation. We collect the offline dataset using the instructions from index 0 to 1000 with GPT-3 text-davinci-002 with prompts from ReAct's official implementation.
\section{Offline Algorithm and Practical Considerations} \label{section: offline_details}
Our offline algorithm is a hierarchical version of
the IQL algorithm \citep{iql}. 
Specifically, the critic leverages IQL (\cref{equation: iql}) while the actor update is based on AWR (Equation \ref{equation: awr}). 

These choices for the actor and for the critic update identify two key hyperparameters, the expectile value \expectile{} (defined in Equation \ref{equation: iql} and \ref{equation: awr}) and the temperature \invtemp{}, whose effect is described in the respective sections.
These hyper-parameters are already present in the original IQL algorithm \citep{iql}, and they have a similar interepretation here.
By choosing \expectile{} and \invtemp{} appropriately, the algorithm identifies a policy whose performance should be between the optimal one and the one that generated the dataset. (In general, recovering the optimal policy by just using a dataset may not be possible as the dataset may not contain information about an optimal policy).  

The offline algorithm shares most of the ingredients with its online counterpart, such as the double critic, target networks, soft updates, and value function heads.
However, some unique features inherited from IQL allow to considerably simplify several algorithmic choices.
\begin{itemize}
    \item The actor and the critic no longer need to be synchronized by using a certain update ratio.
    This is because the critic update defined in Equation \ref{equation: iql} is independent of the actor's current policy, and so the two can be updated with any desired frequency without introducing instabilities.
    \item It is not necessary to pre-train the policy with a behavioural cloning objective, because such objective is already included in the actor's loss function in Equation \ref{equation: awr}. 
    \item The warmup steps for the critic are also not necessary, because the initially small advantage function has a neglegible effect in the AWR loss.
\end{itemize}
\section{Additional Baseline Details}
\subsection{Performance of PPO}
In Figure \ref{fig: ppo}, we provide a zoom-in of the learning curves of PPO for Twenty Questions, Twenty Questions Subset, and Guess My City. We observed that PPO does improve over the SFT checkpoint, especially in the more simple task Twenty Questions Subset. However, as PPO is unable to reuse past off-policy data, we need to collect at least 1024 trajectories of on-policy data for each PPO update, as shown in Appendix \ref{app: hyperparameters}. This observation is consistent with \citet{lmrl}. 
\begin{figure*}[h!]  
    \centering    \includegraphics[width=.75\linewidth]{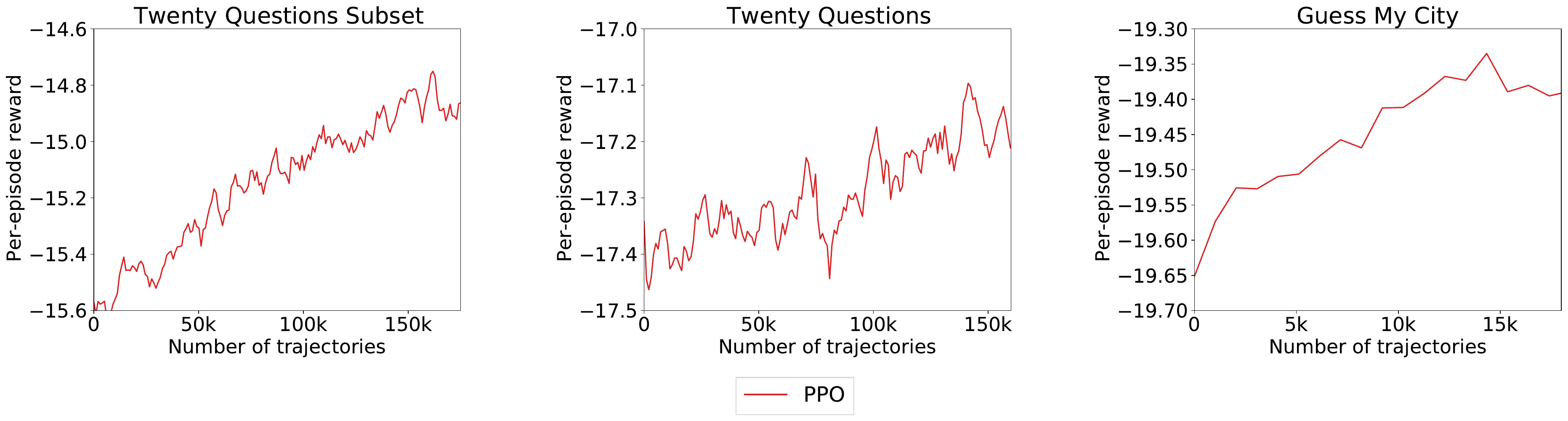}
    \caption{A zoom-in of learning curves of PPO. PPO gradually improves despite its worse sample complexity compared to other off-policy methods.}
    \label{fig: ppo}
\end{figure*}

\subsection{Additional Reproduction Details for WebShop Experiment}
For our WebShop experiment, we utilized the environment and the few-shot prompting baselines from ReAct \citep{yao2023react}. ReAct introduces two prompting strategies: the ReAct method and the Act-only method, which we denote as \emph{ReAct} and \emph{expert-prompt} in Figure \ref{fig: webshop}. The ReAct method additionally allows the agent to articulate its reasoning before making an action, whereas the Act-only method does not. We use the original prompts in ReAct's implementation without any modifications, and to ensure that our paper is self-contained, we also include these prompts in Figure \ref{fig:act-prompt} and \ref{fig:react-prompt}.

We assessed these algorithms on the webshop index from 2000 to 2100 to establish the LLM-based few-shot baselines in Figure \ref{fig: webshop}. Notably, by early January 2024, OpenAI had deprecated the GPT-3 text-davinci-002 model used in the original ReAct study. Consequently, we switched to the gpt-3.5-turbo-instruct model. While the expert-prompt approach yielded performance comparable to the best performing text-davinci-002 based method, the ReAct method underperformed significantly with the newer model. This discrepancy, also observed and replicated in other tasks independetly by other researchers \footnote{For more details, see the Github Issue on ReAct's repository: \url{https://github.com/ysymyth/ReAct}}, is faithfully depicted in Figure \ref{fig: webshop}.
\begin{figure}[H]
    \centering
    \setlength{\fboxrule}{0.8pt}
    \fbox{
        \parbox{0.9\textwidth}{
            \textbf{ReAct Prompt}\\
Webshop 

Instruction:  

i would like a 3 ounce bottle of bright citrus deodorant for sensitive skin, and price lower than 50.00 dollars 

[Search]

Action: search[3 ounce bright citrus deodorant sensitive skin]

Observation: 

[Back to Search] 

Page 1 (Total results: 50) 

[Next $>$] 

[B078GWRC1J] 

Bright Citrus Deodorant by Earth Mama | Natural and Safe for Sensitive Skin, Pregnancy and Breastfeeding, Contains Organic Calendula 3-Ounce 

\$10.99 

[B078GTKVXY] 

Ginger Fresh Deodorant by Earth Mama | Natural and Safe for Sensitive Skin, Pregnancy and Breastfeeding, Contains Organic Calendula 3-Ounce 

\$10.99 

[B08KBVJ4XN] 

Barrel and Oak - Aluminum-Free Deodorant, Deodorant for Men, Essential Oil-Based Scent, 24-Hour Odor Protection, Cedar \& Patchouli Blend, Gentle on Sensitive Skin (Mountain Sage, 2.7 oz, 2-Pack) 

\$15.95

Action: think[B078GWRC1J and B078GTKVXY are bright citrus deodorant less then 50 dollars. I can check B078GWRC1J first.]

Observation: OK.

Action: click[B078GWRC1J]

Observation: 

[Back to Search] 

[$<$ Prev] 

scent [assorted scents][bright citrus][calming lavender][ginger fresh][simply non-scents]

size [travel set (4-pack)][3 ounce (pack of 1)][3-ounce (2-pack)]

Bright Citrus Deodorant by Earth Mama | Natural and Safe for Sensitive Skin, Pregnancy and Breastfeeding, Contains Organic Calendula 3-Ounce 

Price: \$10.99 

Rating: N.A. 

[Description] 

[Features] 

[Reviews] 

[Buy Now]

Action: think[For 3 ounce bottle of bright citrus deodorant for sensitive skin, the item has options 'bright citrus' and '3 ounce (pack of 1)' and seems good to buy.]

Observation: OK.

Action: click[bright citrus]

Observation: You have clicked bright citrus.

Action: click[3 ounce (pack of 1)]

Observation: You have clicked 3 ounce (pack of 1).

Action: click[Buy Now]
        }
    }
    \caption{Prompts used by the act-only (expert) baseline.}
    \label{fig:react-prompt}
\end{figure}
\begin{figure}[H]
    \centering
    \setlength{\fboxrule}{0.8pt}
    \fbox{
        \parbox{0.9\textwidth}{
            \textbf{Act-only (expert) Prompt}\\
Instruction:  

i would like a 3 ounce bottle of bright citrus deodorant for sensitive skin, and price lower than 50.00 dollars 

[Search]

Action: search[3 ounce bright citrus deodorant sensitive skin]

Observation: 

[Back to Search] 

Page 1 (Total results: 50) 

[Next $>$] 

[B078GWRC1J] 

Bright Citrus Deodorant by Earth Mama | Natural and Safe for Sensitive Skin, Pregnancy and Breastfeeding, Contains Organic Calendula 3-Ounce 

\$10.99 

[B078GTKVXY] 

Ginger Fresh Deodorant by Earth Mama | Natural and Safe for Sensitive Skin, Pregnancy and Breastfeeding, Contains Organic Calendula 3-Ounce 

\$10.99 

[B08KBVJ4XN] 

Barrel and Oak - Aluminum-Free Deodorant, Deodorant for Men, Essential Oil-Based Scent, 24-Hour Odor Protection, Cedar \& Patchouli Blend, Gentle on Sensitive Skin (Mountain Sage, 2.7 oz, 2-Pack) 

\$15.95

Action: click[B078GWRC1J]

Observation: 

[Back to Search] 

[$<$ Prev] 

scent [assorted scents][bright citrus][calming lavender][ginger fresh][simply non-scents]

size [travel set (4-pack)][3 ounce (pack of 1)][3-ounce (2-pack)]

Bright Citrus Deodorant by Earth Mama | Natural and Safe for Sensitive Skin, Pregnancy and Breastfeeding, Contains Organic Calendula 3-Ounce 

Price: \$10.99 

Rating: N.A. 

[Description] 

[Features] 

[Reviews] 

[Buy Now]

Action: click[bright citrus]

Observation: You have clicked bright citrus.

Action: click[3 ounce (pack of 1)]

Observation: You have clicked 3 ounce (pack of 1). 

Action: click[Buy Now]
        }
    }
    \caption{Prompts used by the act-only (expert) baseline.}
    \label{fig:act-prompt}
\end{figure}
\section{Additional Experimental Results}
\subsection{TD-Learning v.s. MC Regression}
To validate whether TD-learning plays an important role in ArCHer, we carried out an ablation study where we replaced TD-learning in ArCHer with MC regression for critic updates. To make sure that data in the replay buffer are generated by similar policies, we use a smaller replay buffer that contains trajectories collected by three latest policies. The ablation results are shown in Figure \ref{fig: mc_returns}. We observed that MC regression may learn faster in the beginning as the information propagates faster than per step TD learning, but it fails to learn reliably over then entire training process. This ablation result shows the importance of TD-learning to effectively make use of off-policy data.

\subsection{Online IQL Critic Loss}
In our ablation study of using IQL critic loss in the online setting, we set $\tau = 0.9$ to encourage more risk-seeking for better explorations. As we can see from Figure \ref{fig: iql_critic}, the use of IQL critic can indeed accelerate explorations in simple tasks such as Detective Game. However, a naive instantiation of IQL fails to provide an unbiased estimate for the policy gradient resulting in potential instabilities in harder tasks such as Twenty Questions.

\begin{figure}[ht!]
  \centering
  \begin{subfigure}[b]{0.4\textwidth}
    \centering
    \includegraphics[width=\linewidth]{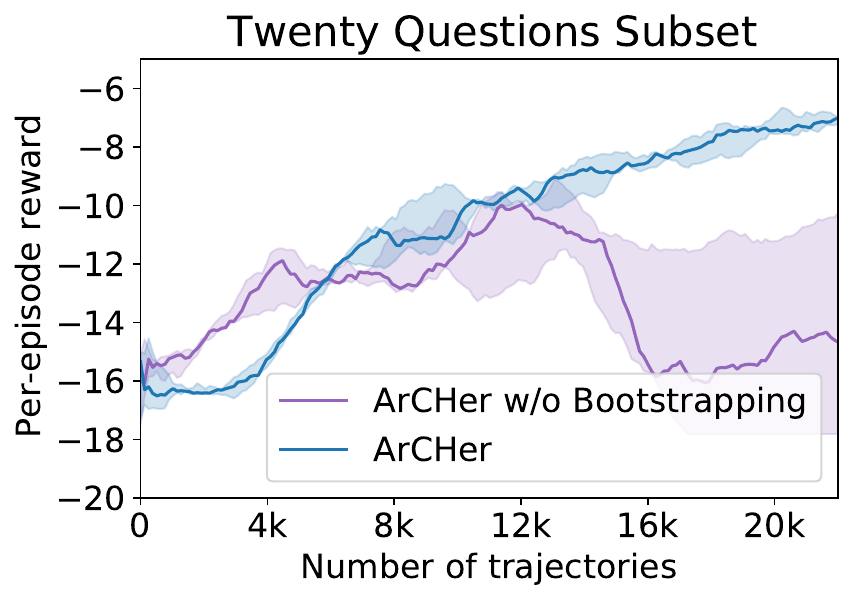}
    \caption{Ablation study of replacing ArCHer critic update with MC Regression. As data in the replay buffer is generated by different policies, MC regression fails to improve reliably.}
    \label{fig: mc_returns}
  \end{subfigure}
  \hfill
  \begin{subfigure}[b]{0.51\textwidth}
    \centering
    \includegraphics[width=\textwidth]{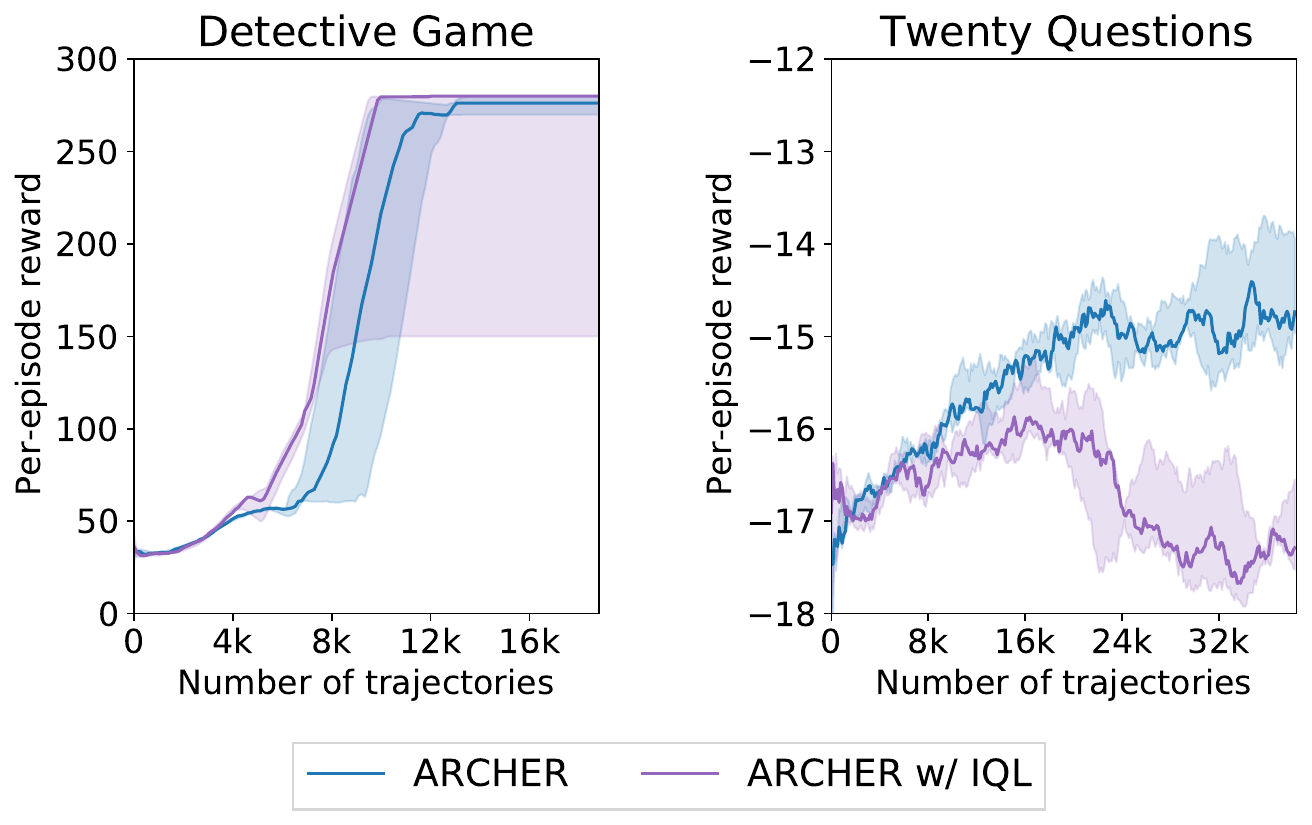}
    \caption{Ablation study of replacing ArCHer critic update with an IQL loss. Using an IQL critic loss can sometimes accelerate training but introduce more instability.}
    \label{fig: iql_critic}
  \end{subfigure}
\end{figure}
\section{Reward Hacking}\label{app: hacking}
To understand whether our agent has really learnt to behave more strategically with reinforcement learning instead of exploiting the environment, we present sample trajectories for Twenty Questions Subset, Twenty Questions, Guess My City, and WebShop in Figure \ref{fig:twenty_questions_subset}, \ref{fig:twenty_questions}, \ref{fig:guess_my_city}, \ref{fig:webshop_example}. Note that for detective game, there is only a limited number of legal actions and all illegal actions will lead to making no progress in the game, resulting in a lower rewards. For Twenty Questions Subset and Twenty Questions, we observed that the successful dialogues tend to be mostly natural and strategic while some unsuccessful dialogues tend to contain more meaningless repetitions. For Guess My City, we observed that the agent learnt to ask the predominant language of the city which is one of the most informative questions in the SFT distribution we considered in Appendix \ref{app: dataset_details}. However, as there are some vulnerabilities and hallucinations of the imperfect oracle model for generating free-form answers, some of the conversations might diverge to meaningless repetitions in the end. For WebShop, as the environment forces the format of legal actions for the web server to process, the trajectories tend to be natural and realistic. 
\begin{figure}[H]
    \centering
    \setlength{\fboxrule}{0.8pt}
    \fbox{
        \parbox{0.9\textwidth}{
            \textbf{Twenty Questions Subset}\\
            Questions:\\
            Is it a living thing? No.\\
            Is it a man-made? Yes.\\
            Is it a machine? No.\\
            Is it a tool? No.\\
            Is it a piece of clothing? No.\\
            Is it a piece of furniture? Yes.\\
            Is it a chair? Yes.
        }
    }
    \caption{Example environment interactions of ArCHer for Twenty Questions Subset at the end of training. It can be seen that in this environment, ArCHer learnt a near-optimal binary search strategy.}
    \label{fig:twenty_questions_subset}
\end{figure}
\begin{figure}[H]
    \centering
    \setlength{\fboxrule}{0.8pt}
    \fbox{
        \parbox{0.9\textwidth}{
            \textbf{Twenty Questions}\\
            Questions:\\
            Is it a living thing? Yes.\\
            Is the object a mammal? No.\\
            Is the object a plant? Yes.\\
            Is the object edible? Yes.\\
            Is the object a vegetable? No.\\
            Is the object a fruit? Yes.\\
            Is the object a citrus fruit? No.\\
            Is the object a berry? Yes.\\
            Is the object a strawberry? No.\\
            Is the object a strawberry? No.\\
            Is the object a blueberry? No.\\
            Is the object a rasepeberry? No.\\
            Is the object a currant? No.\\
            Is the object a grape? Yes.\\
            \\
            \\
            Questions:\\
            Is the object a living thing? No.\\
            Is the object man-made? Yes.\\
            Is the object a vehicle? No.\\
            Is the object a piece of jewelry? No.\\
            Is the object a musical instrument? Yes.\\
            Is the object a stringed instrument? No.\\
            Is the object a wind instrument? Yes.\\
            Is the object a trumpet? No.\\
            Is the object a trombone? No.\\
            Is the object a flute? No.\\
            Is the object a saxophone? No.\\
            Is the object a clarinet? No.\\
            Is the object a clarinet? No.\\
            Is the object a clarinet? No.\\
            Is the object a clarinet? No.\\
            Is the object a clarinet? No.\\
            Is the object a clarinet? No.\\
            Is the object a clarinet? No.\\
            Is the object a clarinet? No.\\
            Is the object a clarinet? No.\\

        }
    }
    \caption{Example environment interactions of ArCHer for Twenty Questions (one successful and one unsuccessful trajectory) at the end of training. In this environment, the conversation is smooth and natural for successful trajectories. However, for unsuccessful trajectories, the agent may sometimes keep repeating the same question.}
    \label{fig:twenty_questions}
\end{figure}

\begin{figure}[H]
    \centering
    \setlength{\fboxrule}{0.8pt}
    \fbox{
        \parbox{0.9\textwidth}{
            \textbf{Guess My City}\\
            Questions:\\
            What is the predominant language spoken: \\
            The predominant language spoken is Mandarin Chinese.\\
            Is the city you are from Shanghai? \\
            No, I am not from Shanghai.\\
            Is the city you are from Taipei,Is the city you are from TaipeiIs the city you are from Tis aipeiIs the city you are from Tai\\ Yes\\
            \\
            \\
            Questions:\\
            What is the predominant language spoken: \\
            English\\
            Is the city you are from London?\\
            No, I am not from London.\\
            Is the city you are from LondonIs the city you are from London Is the city you are from London\\
            Yes, my city is from London.\\
            Is the city you are from LondonIs the city you are from London Is the city you are from London\\
            Yes, my city is from London.\\
            Is the city you are from LondonIs the city you are from London Is the city you are from London\\
            Yes, my city is from London.\\
            Is the city you are from LondonIs the city you are from London Is the city you are from London\\
            Yes, my city is from London.\\
            Is the city you are from LondonIs the city you are from London Is the city you are from London\\
            Yes, my city is from London.\\
            ......
        }
    }
    \caption{Example environment interactions of ArCHer for Guess My City at the end of training. The agent learns to ask the question for the predominant language, which is one of the most informative questions from the offline SFT distribution. However, as the oracle model provides free-form responses can be out of distribution, the agent is more prone to collapsing in out-of-distribution states.}
    \label{fig:guess_my_city}
\end{figure}
\begin{figure}[H]
    \centering
    \setlength{\fboxrule}{0.8pt}
    \fbox{
        \parbox{0.9\textwidth}{
            \textbf{WebShop}\\
            Observation:\\
            WebShop\\
            Instruction:\\
            i would like a citrus yao conditioner made with natural ingredients, and price lower than 50.00 dollars\\
            $[$Search$]$\\
            Action:\\
            search$[$citrus yao conditioner natural ingredients$]$\\
            Observation:\\
            $[$Back to Search$]$\\
            Page 1 (Total results: 50)\\
            $[$Next $>$$]$\\
            $[$B099KWBV49$]$\\
            VIORI Terrace Garden Conditioner Bar - Handcrafted with Longsheng Rice Water \& Natural Ingredients - Sulfate-free, Paraben-free, Cruelty-free, Phthalate-free, pH balanced 100\% Vegan, Zero-Waste\\
            \$16.25\\
            $[$B09KQZ9GTK$]$\\
            VIORI Hidden Waterfall Shampoo \& Conditioner Bar Set WITHOUT Bamboo Holder - Handcrafted with Longsheng Rice Water \& Natural Ingredients - Sulfate-free, Paraben-free, Phthalate-free, 100\% Vegan\\
            \$29.25\\
            Action:\\
            click$[$B099KWBV49]\\
            Observation:\\
            $[$Back to Search$]$\\
            $[$$<$ Prev$]$\\
            style $[$bamboo bar holder$]$$[$citrus yao$]$$[$hidden waterfall$]$\\
            size $[$bamboo bar holder$]$$[$conditioner$]$$[$shampoo$]$$[$shampoo + conditioner set$]$\\
            VIORI Terrace Garden Conditioner Bar - Handcrafted with Longsheng Rice Water \& Natural Ingredients - Sulfate-free, Paraben-free, Cruelty-free, Phthalate-free, pH balanced 100\% Vegan, Zero-Waste\\
            Price: \$16.25\\
            Rating: N.A.\\
            $[$Description$]$\\
            $[$Features$]$\\
            $[$Reviews$]$\\
            $[$Attributes$]$\\
            $[$Buy Now$]$\\
            Action:\\
            click$[$citrus yao$]$\\
            Observation:\\
            You have clicked citrus yao.\\
            Action:\\
            click$[$Buy Now$]$\\
            Observation:\\
            Your score (min 0.0, max 1.0): 0.75\\
        }
    }
    \caption{Example environment interactions of ArCHer for WebShop at the end of training. It can be seen that in this environment, the behavior of ArCHer is very natural.}
    \label{fig:webshop_example}
\end{figure}
\section{Hyperparameters} \label{app: hyperparameters}
Table \ref{table: hyperparameters_combination_lock} shows hyperparameters for ArCHer and other baselines for all environments. As shown in the table, most hyperparameters are held the same across all environment except that a smaller actor learning rate is used for Twenty Questions and a larger number of rollout trajectories is used for Web Shopping. This shows that ArCHer is relatively insensitive to selection of hyperparameters and does not require too much hyperparameter tuning to work in different environments.
\begin{table}[h] 
\caption{Hyperparameters for All Experiments} 
\centering
\resizebox{\linewidth}{!}{  
\begin{tabular}{c|c|ccccc} 
\toprule
& & Detective Game& Twenty Questions Subset& Twenty Questions& Guess My City& Web Shopping\\
\hline
\multirow{2}{4em}{SFT} & actor lr & 2e-4 & 2e-4 & 2e-4 & 2e-4 & 2e-4\\
& batch size& 32 & 32 & 32 & 32 & 32\\
\hline
\multirow{6}{4em}{Filtered BC} & actor lr & 3e-4 & 3e-4 & 3e-5 & 3e-4 & 3e-4\\
& batch size & 256 & 256 & 256 & 256 & 256\\
& rollout trajectories & 32 & 32 & 32 & 32 & 128\\
& replay buffer size & 10000 & 10000 & 10000 & 10000 & 10000\\
& filtering percentage & 0.1 & 0.1 & 0.1 & 0.1 & 0.1\\
& actor updates per iteration & 10 & 10 & 10 & 10 & 10\\
\hline
\multirow{8}{4em}{CHAI} & actor lr & 3e-4 & 3e-4 & 3e-5 & 3e-4 & 3e-4\\
& critic lr & 6e-4 & 6e-4 & 6e-4 & 6e-4 & 6e-4\\
& batch size & 256 & 256 & 256 & 256 & 256\\
& rollout trajectories & 128 & 128 & 128 & 128 & 512\\
& replay buffer size & 10000 & 10000 & 10000 & 10000 & 10000\\
& critic updates per iteration & 50 & 50 & 50 & 50 & 50\\
& discount & 0.98 & 0.95 & 0.95 & 0.95 & 0.9\\
& polyak alpha & 0.9 & 0.9 & 0.9 & 0.9 & 0.9\\
\hline
\multirow{10}{4em}{ArCHer} & actor lr & 3e-4 & 3e-4 & 3e-5 & 3e-4 & 3e-4\\
& critic lr & 6e-4 & 6e-4 & 6e-4 & 6e-4 & 6e-4\\
& batch size & 256 & 256 & 256 & 256 & 256\\
& rollout trajectories & 128 & 128 & 128 & 128 & 512\\
& replay buffer size & 10000 & 10000 & 10000 & 10000 & 10000\\
& critic updates per iteration & 50 & 50 & 50 & 50 & 50\\
& discount & 0.98 & 0.95 & 0.95 & 0.95 & 0.9\\
& polyak alpha & 0.9 & 0.9 & 0.9 & 0.9 & 0.9\\
& actor updates per iteration & 3 & 3 & 3 & 3 & 3\\
& warm up iters with no actor update & 10 & 10 & 20 & 10 & 20\\
\hline
\multirow{10}{4em}{ArCHer w/ Baseline} & actor lr & \textbackslash & \textbackslash & \textbackslash & 3e-4 & \textbackslash \\
& critic lr & \textbackslash & \textbackslash & \textbackslash & 6e-4 & \textbackslash\\
& batch size & \textbackslash & \textbackslash & \textbackslash & 256 & \textbackslash\\
& rollout trajectories & \textbackslash & \textbackslash & \textbackslash & 128 & \textbackslash \\
& replay buffer size & \textbackslash & \textbackslash & \textbackslash & 10000 & \textbackslash\\
& critic updates per iteration & \textbackslash & \textbackslash & \textbackslash & 50 & \textbackslash\\
& discount & \textbackslash & \textbackslash & \textbackslash & 0.95 & \textbackslash\\
& actor updates per iteration & \textbackslash & \textbackslash & \textbackslash & 3 & \textbackslash\\
& baseline updates per iteration & \textbackslash & \textbackslash & \textbackslash & 60 & \textbackslash\\
& warm up iters with no actor update & \textbackslash & \textbackslash & \textbackslash & 10 & \textbackslash\\
& polyak alpha & \textbackslash & \textbackslash & \textbackslash & 0.9 & \textbackslash\\
\hline
\multirow{9}{4em}{PPO} & actor lr & \textbackslash & 6e-6 & 6e-6 & 6e-4 & \textbackslash \\
& batch size & \textbackslash & 1024 & 1024 & 1024 & \textbackslash\\
& rollout trajectories & \textbackslash & 2048 & 2048 & 1024 & \textbackslash \\
& PPO epochs & \textbackslash & 10 & 20 & 4 & \textbackslash\\
& discount & \textbackslash & 0.95 & 0.95 & 0.95 & \textbackslash\\
& GAE lambda & \textbackslash & 0.95 & 0.95 & 0.95 & \textbackslash\\
& clip range & \textbackslash & 0.2 & 0.2 & 0.2 & \textbackslash\\
\toprule
\end{tabular}}\label{table: hyperparameters_combination_lock}
\caption{Hyperparameters for ArCHer and baseline methods for all experiments.}
\end{table}

\section{Proof of Main Theorem}
\label{proofs}
\subsection{Equivalent Utterance and Token Level MDPs}
We consider the groundtruth utterance-level discounted infinite horizon MDP $\Mcal =\{\Scal,\Acal, \gamma, r, \mu_0, P\}$ as defined in Section \ref{section: hierarchical_mdp}. An equivalent token-level infinite horizon MDP can be constructed with $\widetilde{\Mcal} = \{\widetilde{\Scal}, \widetilde{\Acal}, \gamma^{1/L}, r, \mu_0, \widetilde{P}\}$, where $L$ is the length of each utterance, $\widetilde{\Acal}$ contains all individual tokens with a special padding token (such that each utterance is padded to the same length $L$), $\widetilde{\Scal} \in \Scal \times \widetilde{\Acal}^L$ contains both the state in the utterance-level MDP and the partial sequence of utterance that has already been generated, $\gamma^{1/L}$ is the equivalent discount factor. Note that this definition of token-level MDP is not the same as the definition of token-level MDP in the hierarchical language MDP defined for ArCHer in Section \ref{section: hierarchical_mdp}. The token-level MDP for ArCHer is only embedded in one particular utterance and the only reward that it receives is at the end of the utterance from the utterance-level critic while the equivalent token-level MDP spans multiple utterances and receives rewards from the external environment at the end of each utterance. This construction of token-level MDP is ``equivalent'' to the utterance-level MDP in the sense that for any autoregressive policy $\pi$ that generates an utterance token by token, we have:
\begin{align*}
    \forall s \in \Scal, V^\pi(s) = \widetilde{V}^\pi(s),
\end{align*}
where $V^\pi$ and $\widetilde{V}^\pi$ are value functions of $\pi$ in the utterance-level MDP $\Mcal$ and the token-level MDP $\widetilde{\Mcal}$ respectively. We use $\widetilde{\pi}$ for the same utterance-level policy $\pi$ when it generates one token at a time. 
 
As usual, for any MDP $\Mcal$, we define $d^\pi\in\Delta(\Scal\times\Acal)$ as the average state-action occupancy measure of policy $\pi$ such that:
\begin{align*}
    d^{\pi}(s, a) = (1 - \gamma) (\mu_0(s)\pi(a|s) + \sum_{t=1}^\infty \gamma ^t P^\pi(s_t = s, a_t = a))
\end{align*}We denote $V^\pi = \EE_{s_0\sim \mu_0} V^\pi(s_0)$ as the expected total discounted reward of $\pi$. We denote $\Tcal^\pi$ as the Bellman operator associated with $\pi$, i.e., given a function $f\in \Scal\times\Acal\mapsto \mathbb{R}$, we have 
\begin{align*}
\Tcal^\pi f(s,a) = r(s,a) + \gamma \EE_{s'\sim P(s,a),a'\sim \pi(s')} \brk*{f(s',a')}. 
\end{align*}
Similar definitions can be made in the token-level MDP $\widetilde{\Mcal}$ for $\widetilde{d}^\pi$, $\widetilde{V}^\pi$, and $\widetilde{\Tcal}^\pi$. We also define $\widetilde{A}^\pi$ as the advantage function in the token level.

\subsection{Fitted Policy Evaluation Subroutine}
In this section, we present our theoretical subroutine for fitting the critic in Algorithm \ref{alg:FPE}. On a high level, it just repeats finding the Q function that minimizes the bellman error with respect to the Q function in the last iteration, and returns the average of all Q functions in the end. Both critic fitting in the token-level MDP $\widetilde{\Mcal}$  or in the utterance-level MDP $\Mcal$ follows from the same subroutine with the same function class $\Fcal$ that map the space of seuqnces of tokens to real values. This theoretical algorithm is simply a more fomalized version of the critic update in Algorithm \ref{alg:archer_detail}.
\begin{algorithm}[h]
\begin{algorithmic}[1] 
\Require Policy $\pi$, function class $\Fcal$, number of iterations $K$, weight $\lambda$ 
\Require $K$ independent datasets \(\Dcal_{1:K} = \{(s,a, r, s')\}\) of $M$ many samples each from the same offline distribution $\nu$\label{line:don_sample}. 
\State Initialize $f_0 \in \Fcal$. 
\For{$k = 1, \dots, K$} 
\State Solve the square loss regression problem to compute: 
\begin{align*} 
f_k \leftarrow \argmin_{f\in\Fcal} ~ \widehat \EE_{\Dcal_k}[( f(s,a) - r - \gamma f_{k-1}(s', \pi(s')))^2] \numberthis \label{eq:fitted_Q_eqn} 
\end{align*}  
\EndFor  
\State \textbf{Return} $\bar f = \tfrac{1}{K} \sum_{k=1}^{K} f_{k}$.
\end{algorithmic}
\caption{\textbf{F}itted \textbf{P}olicy \textbf{E}valuation (\textbf{FPE})} 
\label{alg:FPE} 
\end{algorithm}

\subsection{Assumptions}
We present the two important assumptions that we use for analyzing FPE subroutine, and both assumptions share the same definition for utterance-level MDP $\Mcal$ and token-level MDP $\widetilde{\Mcal}$.
\begin{assumption}[Bellman Completeness~\citep{song2023hybrid, zhou2023offline, zanette2023realizability, xie2021bellman}]
\label{def:inherent_BE_error} We say that \(
\Fcal
\) is Bellman Complete for some policy $\pi$, if for all \(f \in \Fcal\), there exists a \(f' \in \Fcal\) such that \(\|f'(s,a) - \Tcal^\pi f(s,a)\|_{\infty} = 0\). 
\end{assumption} 
\begin{assumption}[Density Ratio Coefficient~\citep{zhan2022offline, foster2021offline}] 
\label{def:offline_coverage} 
Given the offline distribution \(\nu\), for any policy $\pi$, we define the density ratio coefficient as 
 \begin{align*} 
C_{\nu,\pi} :=  \max_{s,a}  \frac{d^\pi(s,a)}{ \nu(s,a)}. %
\end{align*} 
\end{assumption}

The following two lemmas compare the utterance-level assumptions and token-level assumptions.
\begin{lemma}\label{lem: bellman_complete}
    For any stationary policy $\pi$, token-level Bellman Completeness for $\widetilde{\Mcal}$ $\implies$ utterance-level Bellman Completeness for $\Mcal$
\end{lemma}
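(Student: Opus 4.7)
My plan is to prove that, on every utterance-level state--action pair, the utterance-level Bellman operator $\Tcal^\pi$ coincides with the $L$-fold composition $(\widetilde{\Tcal}^{\widetilde{\pi}})^L$ of the token-level Bellman operator; with that identity in hand, the conclusion is immediate. Concretely, given $f \in \Fcal$, token-level Bellman completeness produces some $f_1 \in \Fcal$ equal pointwise to $\widetilde{\Tcal}^{\widetilde{\pi}} f$, and iterating $L$ times yields $f_L = (\widetilde{\Tcal}^{\widetilde{\pi}})^L f \in \Fcal$, which then witnesses utterance-level Bellman completeness by the claimed identity.

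The core of the argument is a short induction on $k$ computing $f_k$ at the token-level input $((s, a^{1:L-1}), a^L)$ corresponding to the utterance-level pair $(s, a)$. The reward structure of $\widetilde{\Mcal}$, forced by the equivalence $V^\pi = \widetilde{V}^{\widetilde{\pi}}$, places $r(s,a)$ at the final token of the utterance and zero on all intermediate tokens. Therefore one application of $\widetilde{\Tcal}^{\widetilde{\pi}}$ at $((s, a^{1:L-1}), a^L)$ triggers the utterance transition $\widetilde{P}$ and gives $f_1((s, a^{1:L-1}), a^L) = r(s,a) + \gamma^{1/L}\, \EE_{s' \sim P(\cdot\mid s,a),\, a'^1 \sim \widetilde{\pi}(s')}[f(s', a'^1)]$. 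Each subsequent application only unrolls one more token of the \emph{next} utterance (since the intermediate token reward is zero), so by induction $f_k((s, a^{1:L-1}), a^L) = r(s,a) + \gamma^{k/L}\, \EE_{s',\, a'^{1:k}}[f(s', a'^{1:k})]$. At $k=L$ the discount collapses to $\gamma^{L/L} = \gamma$ and the full next utterance $a'^{1:L}$ has been sampled via the autoregressive factorization that relates $\widetilde{\pi}$ to $\pi$, giving $f_L(s, a) = r(s,a) + \gamma\, \EE_{s',a'}[f(s', a')] = \Tcal^\pi f(s, a)$.

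I expect the main obstacle to be purely notational rather than conceptual: one must be careful about the canonical identification of an utterance-level pair $(s, a)$ with its token-level image $((s, a^{1:L-1}), a^L)$, and about where within the utterance the token-level reward is located, since this fixes the correct base point for the induction. The key structural observation is that the token-level discount $\gamma^{1/L}$ is calibrated so that $L$ consecutive token-level Bellman backups telescope into one utterance-level Bellman backup, with the product of token-wise conditional expectations under $\widetilde{\pi}$ collapsing into the single utterance-level expectation under $\pi$.
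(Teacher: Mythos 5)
Your proposal is correct and follows essentially the same route as the paper: the paper's proof likewise decomposes the utterance-level backup into $L$ token-level backups (written as a telescoping sum of token-level Bellman errors, each killed by token-level completeness), which is exactly the unrolled form of your operator identity $\Tcal^\pi = (\widetilde{\Tcal}^{\widetilde{\pi}})^L$ at utterance boundaries together with iterating completeness $L$ times. Your version merely makes the composition identity and the placement of the reward at the final token explicit, which the paper leaves implicit.
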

\begin{proof}
    $\forall s \in \Scal, a \in \Acal$ being state and action in the utterance-level, the utterance $a$ can be decomposed into $L$ tokens in the action space of the token-level MDP:
    \begin{align*}
        a = \widetilde{a}_{1:L}, \widetilde{a}_i \in \widetilde{\Acal}, i = 1,\dots,L.
    \end{align*}
    Therefore, 
    \begin{align*}
        &\min_{f' \in \Fcal} |f'(s,a) - \Tcal^\pi f'(s,a)| \\
        = &  \min_{f_1,\dots, f_L \in \Fcal}|f_1(s,a) - \widetilde{\Tcal}^\pi f_2(s,a) + r(s,a) + \gamma^{1/L} \EE_{s' \sim P(\cdot|s,a), \widetilde{a}_1 \sim \widetilde{\pi}(\cdot|s')}f_2(s', \widetilde{a}_1) \\
        & -\gamma^{1/L} \EE_{s' \sim P(\cdot|s,a), \widetilde{a}_1 \sim \widetilde{\pi}(\cdot|s')}\widetilde{\Tcal}^\pi f_3(s', \widetilde{a}_1) + \dots + \gamma^{(L-1)/L} \EE_{s' \sim P(\cdot|s,a), \widetilde{a}_{1:L-1} \sim \widetilde{\pi}(\cdot|s')}f_L(s', \widetilde{a}_{1:L-1})\\
        & - r(s,a) - \gamma^{(L-1)/L} \EE_{s' \sim P(\cdot|s,a), \widetilde{a}_{1:L-1} \sim \widetilde{\pi}(\cdot|s')}\widetilde{\Tcal}^\pi f(s', \widetilde{a}_{1:L-1})|\\
        \le & \min_{f_1,\dots, f_{L} \in \Fcal} |f_1(s,a) - \widetilde{\Tcal}^\pi f_2(s,a)| \sum_{i=2}^{L} \gamma^{(i-1)/L}\EE_{s' \sim P(\cdot| s,a), \widetilde{a}_{1:i-1} \sim \widetilde{\pi}(\cdot|s'))}|f^i(s, \widetilde{a}_{1:i-1}) -\widetilde{\Tcal}^\pi((s, \widetilde{a}_{1:i-1})|\\
        \le & 0,
    \end{align*}
    where the last inequality follows from the assumption of token-level Bellman Complete.
\end{proof}
The next lemma assumes that the offline distribution $\nu$ is the same for both token-level MDP and utterance-level MDP, i.e. the token-level transitions are derived by splitting utterance-level transitions. We use $\widetilde{\nu}$ to denote the token-level offline distribution created in this way.
\begin{lemma}\label{lem: density_ratio}
    For any stationary policy $\pi$, and any offline distribution $\nu$, we have Density Ratio Coefficient for token-level $\widetilde{\Mcal}$ = Density Ratio Coefficient for utterance-level $\Mcal$.
    \begin{align*}
        C_{\nu, \pi} = C_{\widetilde{\nu}, \pi}
    \end{align*}
\end{lemma}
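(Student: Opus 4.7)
The plan is to compute $C_{\nu,\pi}$ and $C_{\widetilde{\nu},\pi}$ side by side using the autoregressive structure of the token-level policy, and then relate them via a mediant inequality. The key reindexing observation is that any token-level state of the form $(s, a^{1:h})$ with $h \in \{0, \ldots, L-1\}$ can only be visited at token-level timesteps $\tau = tL + h$ for some $t \geq 0$, which lets the token-level occupancy inherit a clean factorization from its utterance-level counterpart.

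First I would write $d^\pi(s, a) = u(s)\,\pi(a|s)$ with $u(s) = (1-\gamma)\sum_{t \ge 0} \gamma^t\, P^\pi(s_t = s)$, and then use the reindexing above together with the autoregressive decomposition $\pi(a|s) = \prod_{i=1}^L \widetilde{\pi}(a^i | s, a^{1:i-1})$ to derive
$$\widetilde{d}^{\widetilde{\pi}}((s, a^{1:h}), a^{h+1}) \;=\; \frac{(1-\gamma^{1/L})\,\gamma^{h/L}}{1-\gamma}\; u(s)\; \pi(a^{1:h+1} | s),$$
where $\pi(a^{1:h+1} | s) := \sum_{a^{h+2:L}} \pi((a^{1:h+1},a^{h+2:L}) | s)$ is the marginal probability of the first $h+1$ tokens. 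Since $\widetilde{\nu}$ arises from $\nu$ by the same autoregressive splitting that converts utterance-level trajectories into token-level trajectories (with discount $\gamma^{1/L}$), the identical computation gives $\widetilde{\nu}((s, a^{1:h}), a^{h+1}) = \frac{(1-\gamma^{1/L})\gamma^{h/L}}{1-\gamma}\, \nu(s, a^{1:h+1})$, with $\nu(s, a^{1:h+1})$ the analogous marginal of $\nu$ over the first $h+1$ tokens.

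The discount prefactor then cancels between numerator and denominator, so the token-level density ratio reduces to $\frac{u(s)\,\pi(a^{1:h+1} | s)}{\nu(s, a^{1:h+1})}$. To finish, I would invoke the mediant inequality: for any $h < L-1$,
$$\frac{\pi(a^{1:h+1} | s)}{\nu(s, a^{1:h+1})} = \frac{\sum_{a^{h+2:L}} \pi(a | s)}{\sum_{a^{h+2:L}} \nu(s, a)} \;\le\; \max_{a^{h+2:L}} \frac{\pi(a | s)}{\nu(s, a)},$$
so the supremum of the token-level ratio over $h \in \{0,\ldots,L-2\}$ is bounded by $C_{\nu,\pi}$. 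For $h = L-1$ the token-level ratio coincides exactly with $\frac{u(s)\pi(a | s)}{\nu(s, a)}$, so this bound is attained, giving $C_{\widetilde{\nu},\pi} = C_{\nu,\pi}$.

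The main technical subtlety is pinning down the precise form of $\widetilde{\nu}$ so that its discount-dependent prefactor cancels cleanly with that of $\widetilde{d}^{\widetilde{\pi}}$; this is exactly what is guaranteed by constructing $\widetilde{\nu}$ via the same token-by-token unfolding (with discount $\gamma^{1/L}$) that defines $\widetilde{d}^{\widetilde{\pi}}$, so both inherit the identical per-position weighting by $\gamma^{h/L}$. Beyond this bookkeeping, the argument is elementary: the cancellation of the prefactors, followed by a one-line application of the mediant inequality, does all the work.
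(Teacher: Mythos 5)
Your proof is correct and follows essentially the same route as the paper's: regroup token-level state--action pairs as (utterance state, token prefix, next token), use the mediant inequality $\frac{\sum p}{\sum q}\le \max \frac{p}{q}$ to show partial-prefix ratios are dominated by full-length ones, and identify the full-length ratio with the utterance-level ratio. Your explicit computation of the $\gamma^{h/L}$ prefactors in $\widetilde{d}^\pi$ and $\widetilde{\nu}$ and their cancellation makes precise a bookkeeping step that the paper's proof leaves implicit, but the underlying argument is the same.
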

\begin{proof}
    The proof is constructed by noticing the fact that each token-level state consists of not only the utterance-level state but also all the tokens that have been generated in the current utterance:
    \begin{align*}
        &\max_{\widetilde{s} \in \widetilde{\Scal}, \widetilde{a} \in \widetilde{\Acal}}\frac{\widetilde{d}^\pi(\widetilde{s},\widetilde{a})}{\widetilde{\nu}(\widetilde{s}, \widetilde{a})}\\
        =& \max_{s \in \Scal, \widetilde{a}_{1:i-1} \in \widetilde{\Acal}, i \in [1,L]}\frac{\widetilde{d}^\pi(s,\widetilde{a}_{1:i-1}, \widetilde{a}_i)}{\widetilde{\nu}(s, \widetilde{a}_{1:i-1}, \widetilde{a}_i)}\\
        =& \max_{s \in \Scal, \widetilde{a}_{1:L} \in \widetilde{\Acal}}\frac{\widetilde{d}^\pi(s,\widetilde{a}_{1:L})}{\widetilde{\nu}(s, \widetilde{a}_{1:L})} \\
        =& \max_{s \in \Scal,a \in \Acal} \frac{d^\pi(s,a)}{\nu(s,a)},
    \end{align*}
    where the first equation follows by regrouping states and actions as each state in the token-level state space $\widetilde{s} \in \widetilde{\Scal} = \Scal \times \widetilde{\Acal}^L$, and the second equation holds because $\max_b \frac{p(a,b)}{q(a,b)} \ge 
    \frac{p(a)}{q(a)}$.
\end{proof}

\subsection{Proof of Main Theorem}
Now we are ready to analyze the sample complexity of Fitted Policy Evaluation in Algoritm \ref{alg:FPE}, the proof of our main theorem makes use of several technical lemmas from Section \ref{sec:technical_lemmas}.
\begin{lemma}[Guarantees of FPE] \label{lem:guarantees_fpe}
    For the algorithm described in Algorithm \ref{alg:FPE} with $K$ independent datasets \(\Dcal_{1:K} = \{(s,a, r, s')\}\) such that the effective number of samples $N=MK$, if the function class satisfies $\max_{f \in \Fcal, s \in \Scal, a \in \Acal} |f(s,a)| < R$, then with probability at least $1 - \delta$ the output value function satisfies:
    \begin{align*}
        \EE_{s,a \sim d^\pi}[(\bar{f}(s,a) - \Tcal^\pi \bar{f} (s,a))^2] \le C_{\nu, \pi} \left(\frac{4(R+1)^2}{K} + \frac{256 R^2\log(2|\Fcal|K/\delta)}{M} \right).
    \end{align*}
\end{lemma}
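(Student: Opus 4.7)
The plan is to follow the standard fitted-policy-evaluation template, adapted to the averaged-iterate output $\bar f = \tfrac{1}{K}\sum_{k=1}^K f_k$. The three moving parts are: (i) a per-iteration empirical-risk-minimization bound obtained via Bellman completeness and concentration on each independent batch $\Dcal_k$; (ii) a telescoping identity that turns $\bar f - \Tcal^\pi \bar f$ into an average of per-iteration Bellman residuals plus a vanishing boundary term; (iii) a change of measure from $\nu$ to $d^\pi$ via the density-ratio coefficient.

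For step (i), crucially $\Dcal_k$ is drawn independently of $\Dcal_{1:k-1}$, so I condition on $f_{k-1}$ and view (\ref{eq:fitted_Q_eqn}) as a standard bounded squared-loss regression with target $Y = r + \gamma f_{k-1}(s',\pi(s'))$ whose Bayes regressor is exactly $\Tcal^\pi f_{k-1}$. Bellman completeness (Assumption \ref{def:inherent_BE_error}) gives a realizer of this target in $\Fcal$, so the approximation error vanishes. A Bernstein/relative-deviation bound for finite-class ERM under bounded squared loss then yields, with probability at least $1-\delta/K$, $\EE_\nu[(f_k - \Tcal^\pi f_{k-1})^2] \le C R^2 \log(|\Fcal|K/\delta)/M$ for an absolute constant $C$; a union bound over $k\in [K]$ makes all $K$ such bounds hold simultaneously.

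For step (ii), decompose
\begin{align*}
\bar f - \Tcal^\pi \bar f \;=\; \frac{1}{K}\sum_{k=1}^K (f_k - \Tcal^\pi f_{k-1}) \;+\; \frac{1}{K}\sum_{k=1}^K (\Tcal^\pi f_{k-1} - \Tcal^\pi \bar f).
\end{align*}
Because $\Tcal^\pi f - \Tcal^\pi g$ only differs in the bootstrapped term, the second sum collapses via the telescoping identity $\tfrac{1}{K}\sum_k (f_{k-1} - \bar f) = \tfrac{f_0 - f_K}{K}$ to $\gamma\,\EE_{s',a'}\!\bigl[\tfrac{1}{K}(f_0 - f_K)\bigr]$, which is pointwise at most $2\gamma R / K$ in magnitude. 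Applying $(a+b)^2 \le 2a^2 + 2b^2$ together with Jensen's inequality $\bigl(\tfrac{1}{K}\sum_k x_k\bigr)^2 \le \tfrac{1}{K}\sum_k x_k^2$ then gives
\begin{align*}
(\bar f - \Tcal^\pi \bar f)^2 \;\le\; \frac{2}{K}\sum_{k=1}^K (f_k - \Tcal^\pi f_{k-1})^2 \;+\; \frac{8\gamma^2 R^2}{K^2}.
\end{align*}

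For step (iii), apply $\EE_{d^\pi}[h^2] \le C_{\nu,\pi}\,\EE_\nu[h^2]$ (Assumption \ref{def:offline_coverage}) pointwise to the first summand, insert the step (i) bounds into the resulting average, and absorb $\gamma^2 R^2/K^2 \le (R+1)^2/K$ into the advertised constants to recover the claimed inequality. The main obstacle is step (i): a naive Hoeffding bound on the squared loss would only yield a slow $1/\sqrt{M}$ rate, so the fast $1/M$ rate requires a variance-aware concentration (Bernstein / Freedman / relative-deviation) that exploits the realizability afforded by Bellman completeness to bound the variance of the loss by its mean. Care is also needed to ensure the concentration applies conditionally on $f_{k-1}$ and that the union bound across iterations only costs an additional $\log K$ factor inside the logarithm.
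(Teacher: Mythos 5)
Your proof follows essentially the same route as the paper: a per-iteration least-squares generalization bound with a fast (Bernstein-type) rate whose approximation term vanishes by Bellman completeness, a union bound over the $K$ independent datasets, an algebraic recombination that exploits $\Tcal^\pi \bar f = \tfrac{1}{K}\sum_{k}\Tcal^\pi f_k$ together with Jensen's inequality, and a final change of measure via $C_{\nu,\pi}$. The only difference is bookkeeping in the recombination step: the paper pairs $f_1$ with $\Tcal^\pi f_K$ inside a single Jensen application, giving the boundary term $4(R+1)^2/K$ with no extra factor, whereas your split into the residual average plus the telescoped term $\gamma\,\EE\bigl[(f_0 - f_K)/K\bigr]$ followed by $(a+b)^2 \le 2a^2 + 2b^2$ doubles the statistical term to $512 R^2\log(2|\Fcal|K/\delta)/M$, so your argument recovers the lemma in the same form and rate but only up to this constant factor.
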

\begin{proof}
    By applying Lemma \ref{lem:sq_loss_generalization}, to each iteration $1, \dots, K$, and apply a union bound over all iterations, we have that:
    \begin{align*}
        \forall k, M\EE_{s,a \sim \nu}[(f_k(s,a) - \Tcal^\pi f_{k-1}(s,a))^2] &\le 3M\inf_{f \in \Fcal} \EE_{s,a \sim \nu}[(f(s,a) - \Tcal^\pi f_{k-1}(s,a))^2] + 256 R^2\log(2|\Fcal|K/\delta)\\
        \forall k, \EE_{s,a \sim \nu}[(f_k(s,a) - \Tcal^\pi f_{k-1}(s,a))^2] &\le \frac{256 R^2\log(2|\Fcal|K/\delta)}{M}, \numberthis \label{line:stat_error}\\
    \end{align*}
    where the second line holds by Assumption \ref{def:inherent_BE_error}. To combine the guarantees that we have from each round to the guarantee of the returned value function $\bar{f}$:
    \begin{align*}
        &\EE_{s,a \sim \nu}[(\bar{f}(s,a) - \Tcal^\pi \bar{f}(s,a))^2]\\
         =& \EE_{s,a \sim \nu}\left[\left(\frac{1}{K}(f_1(s,a) - \Tcal^\pi f_K(s,a) + \sum_{k=2}^K f_k(s,a) - \Tcal^\pi f_{k-1}(s,a)) \right)^2\right] \\
         \le& \frac{1}{K} \EE_{s,a \sim \nu} \left[(f_1(s,a) - \Tcal^\pi f_K(s,a))^2 + \sum_{k=2}^K (f_k(s,a) - \Tcal^\pi f_{k-1}(s,a))^2 \right]\\
         \le& \frac{1}{K} \left[4(R+1)^2 + (K-1)\frac{256 R^2\log(2|\Fcal|K/\delta)}{M} \right]\\
         \le& \frac{4(R+1)^2}{K} +  \frac{256 R^2\log(2|\Fcal|K/\delta)}{M}, \numberthis
    \end{align*}
    where the first inequality follows by Jensen's Inequality and the second inequality follows by plugging in Line \ref{line:stat_error}. Then, we can plug in Assumption \ref{def:offline_coverage} to conclude the proof.
    \begin{align*}
        \EE_{s,a \sim d^\pi}[(\bar{f}(s,a) - \Tcal^\pi \bar{f}(s,a))^2] \le & C_{\nu, \pi}\EE_{s,a \sim \nu}[(\bar{f}(s,a) - \Tcal^\pi \bar{f}(s,a))^2]\\
        \le&~  C_{\nu, \pi}(\frac{4(R+1)^2}{K} + \frac{256 R^2\log(2|\Fcal|K/\delta)}{M}). \\
        \le&~  64C_{\nu, \pi}R(R+1)\sqrt{\frac{\log(2|\Fcal|K/\delta)}{KM}}\\
        =&~  64C_{\nu, \pi}R(R+1)\sqrt{\frac{\log(2|\Fcal|K/\delta)}{N}}\\
        :=&~  \epsilon_{stat} \numberthis \label{line:eps_stat}
    \end{align*}
\end{proof}

\setcounter{theorem}{0}
\begin{theorem}[Main Theorem]
    For an utterance-level MDP with discount factor $\gamma^L$, where $L$ is the maximum length of each utterance, suppose utterance-level Assumption \ref{def:inherent_BE_error} and \ref{def:offline_coverage} holds, let $f$ be the final Q function returned by Fitted Policy Evaluation formalized in Algorithm \ref{alg:FPE} at the utterance level, yields a suboptimality gap of
    \begin{align*}
        &\EE_{s,a \sim d^\pi}[((\bar{f}(s,a) - \EE_{a' \sim \pi(\cdot|s)}[\bar{f}(s,a)]) - A^\pi(s,a))^2 ]\\
        \le& \frac{4}{1-\gamma}(\epsilon_{stat} + 2(R+1)\sqrt{\epsilon_{stat}})\\
        \le& \frac{4}{\gamma L(1-\gamma^{1/L})}(\epsilon_{stat} + 2(\frac{1}{1-\gamma}+1)\sqrt{\epsilon_{stat}}).
    \end{align*}
    For an equivalent token-level MDP with discount factor $\gamma^{1/L}$, suppose token-level Assumption \ref{def:inherent_BE_error} and \ref{def:offline_coverage} holds, let $f$ be the final Q function returned by Fitted Policy Evaluation formalized in Algorithm \ref{alg:FPE} at the token level, yields a suboptimality gap of
    \begin{align*}
        &\EE_{s,a \sim d^\pi}[((\bar{f}(s,a) - \EE_{a' \sim \pi(\cdot|s)}[\bar{f}(s,a)]) - \widetilde{A}^\pi(s,a))^2 ]\\
        \le& \frac{4}{(1-\gamma^{1/L})L^{1/2}}(\epsilon_{stat} + 2(\frac{1}{1-\gamma}+1)\sqrt{\epsilon_{stat}}L^{1/4}),\\
    \end{align*}
    where $\epsilon_{stat}$ is the statistical error defined in Line \ref{line:eps_stat} proportional to $N^{-1/2}$ the number of utterance-level transitions. This error term is defined the same for both utterance-level MDP and token-level MDP.
\end{theorem}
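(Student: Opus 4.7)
The plan is to establish the result in three stages. First, I would invoke Lemma \ref{lem:guarantees_fpe} directly on the FPE run to obtain a bound $\EE_{d^\pi}[(\bar f - \Tcal^\pi \bar f)^2] \le \epsilon_{stat}$ under Assumptions \ref{def:inherent_BE_error} and \ref{def:offline_coverage}. For the token-level MDP this applies with the token-level versions of the assumptions, and Lemmas \ref{lem: bellman_complete} and \ref{lem: density_ratio} clarify why these hypotheses are no easier at the token level, so the two applications of FPE stand on the same footing.

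Second, I would convert the Bellman-residual bound into an $L^2(d^\pi)$ bound on the Q-function error $e = \bar f - Q^\pi$. Writing $\Delta = \bar f - \Tcal^\pi \bar f$, the Bellman consistency relation $e = \Delta + \gamma P^\pi e$ unfolds into $|e| \le \sum_{k \ge 0} \gamma^k (P^\pi)^k |\Delta|$. The main technical obstacle is that $d^\pi$ is not invariant under $P^\pi$: the best pointwise bound is $d^\pi (P^\pi)^k \le d^\pi/\gamma^k$, and compounding it with the geometric factor $\gamma^k$ in the series produces a divergent estimate. I would resolve this by interpolating between $L^\infty$ and $L^1$ norms. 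The uniform bound $\|f\|_\infty \le R$ yields $\|e\|_\infty = O(R/(1-\gamma))$, while an $L^2$-inside-$L^1$ argument (Cauchy--Schwarz applied termwise to the series, followed by the density ratio bound) produces a convergent $\sum_k \gamma^{k/2}$ and gives $\EE_{d^\pi}[|e|] = O(\sqrt{\epsilon_{stat}}/(1-\gamma))$. Combining via $\EE_{d^\pi}[e^2] \le \|e\|_\infty \cdot \EE_{d^\pi}[|e|]$ produces a bound of the form $\epsilon_{stat} + (R+1)\sqrt{\epsilon_{stat}}$ times a factor polynomial in $1/(1-\gamma)$, matching the structure in the theorem statement.

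Third, I would translate the Q-function error into the advantage error via the identity $\tilde A(s,a) - A^\pi(s,a) = e(s,a) - \EE_{a'\sim\pi(\cdot|s)}[e(s,a')]$. The elementary inequality $(a-b)^2 \le 2a^2 + 2b^2$, Jensen's inequality, and the factorization $d^\pi(s,a) = d^\pi_S(s)\pi(a|s)$ to control the inner expectation together yield $\EE_{d^\pi}[(\tilde A - A^\pi)^2] \le 4 \EE_{d^\pi}[e^2]$. For the token-level bound I would repeat the entire argument in $\widetilde{\Mcal}$, whose discount factor is $\gamma^{1/L}$: every occurrence of $1/(1-\gamma)$ in the utterance-level derivation is replaced by $1/(1-\gamma^{1/L})$, larger by a factor of order $L$, which after careful bookkeeping produces the extra $L^{1/2}$ and $L^{1/4}$ factors in the theorem. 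The most delicate step I expect is the second stage: bounding $\EE_{d^\pi}[e^2]$ from the Bellman residual bound is where naive approaches diverge, and the precise interpolation between $L^\infty$ and $L^2$ estimates is what produces both the $\epsilon_{stat}$ and $\sqrt{\epsilon_{stat}}$ terms appearing in the final statement.
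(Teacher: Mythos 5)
Your skeleton (FPE guarantee, then Bellman-residual $\to$ Q-error, then Q-error $\to$ advantage error with the factor of $4$) matches the paper, and your first and third stages are essentially the paper's. Your second stage, however, takes a genuinely different route and, as written, does not deliver the stated bound. The paper never unrolls the Bellman recursion: it adds and subtracts $\Tcal^\pi \bar f$ inside the square, bounds the cross term by $2(R+1)\sqrt{\EE_{d^\pi}[(\bar f-\Tcal^\pi\bar f)^2]}$ using the envelope $\max\{|\Tcal^\pi\bar f|,|Q^\pi|\}\le R+1$, and controls the remaining term via a \emph{single} application of Lemma \ref{lem:distribution_step_back}, giving $\EE_{d^\pi}[(\Tcal^\pi\bar f-Q^\pi)^2]\le \gamma\,\EE_{d^\pi}[(\bar f-Q^\pi)^2]$; solving the resulting self-bounding inequality yields exactly $\tfrac{1}{1-\gamma}(\epsilon_{stat}+2(R+1)\sqrt{\epsilon_{stat}})$, with the $\epsilon_{stat}$ and $\sqrt{\epsilon_{stat}}$ terms both arising from that one expansion (the second displayed inequality then needs Lemma \ref{lem:gamma}). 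Your unroll-and-interpolate argument is sound in its $\gamma^{k}\cdot\gamma^{-k/2}$ accounting for $\EE_{d^\pi}|e|$, but with your stated $\|e\|_\infty=O(R/(1-\gamma))$ the product $\|e\|_\infty\cdot\EE_{d^\pi}|e|$ is $O\!\left(R\sqrt{\epsilon_{stat}}/(1-\gamma)^2\right)$, a full horizon factor worse than the theorem's $\tfrac{4}{1-\gamma}\cdot 2(R+1)\sqrt{\epsilon_{stat}}$; recovering the stated rate requires the tighter envelope $\|e\|_\infty\le \|\bar f\|_\infty+\|Q^\pi\|_\infty\le R+\tfrac{1}{1-\gamma}$, which is $O(R)$ under the theorem's own substitution $R+1\mapsto \tfrac{1}{1-\gamma}+1$. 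Since the entire point of the theorem is the precise horizon scaling, "a factor polynomial in $1/(1-\gamma)$" is not enough.

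The more serious gap is the token-level half. You attribute the $L^{1/2}$ and $L^{1/4}$ factors to bookkeeping of replacing $1/(1-\gamma)$ by $1/(1-\gamma^{1/L})$, but no discount-factor bookkeeping produces them: repeating your argument verbatim in $\widetilde{\Mcal}$ gives $\tfrac{4}{1-\gamma^{1/L}}(\epsilon_{stat}+2(R+1)\sqrt{\epsilon_{stat}})$ with the \emph{same} $\epsilon_{stat}$. In the paper these factors come from a sample-counting step you omit: the same $N$ utterance-level transitions split into $NL$ token-level transitions, so the token-level statistical error of Line \ref{line:eps_stat}, which scales as the inverse square root of the number of transitions, becomes $\epsilon_{stat}L^{-1/2}$ (and $\sqrt{\epsilon_{stat}}\,L^{-1/4}$ for the cross term); pulling $L^{-1/2}$ out front gives the stated token-level bound. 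Without this observation, the token-level inequality—and hence the $\gamma\sqrt{L}$ utterance-vs-token comparison the theorem is meant to support—does not follow from your argument.
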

\begin{proof}
    First, we start with analyzing utterance-level FPE with effective number of samples, we start by bounding the errors of the Q functions:
    \begin{align*}
        &\EE_{s, a \sim d^\pi}(\bar{f}(s,a) - Q^\pi(s,a))^2 \\
        =& \EE_{s, a \sim d^\pi}(\bar{f}(s,a) - \Tcal^\pi\bar{f}(s,a) + \Tcal^\pi\bar{f}(s,a) - Q^\pi(s,a))^2\\
        =& \EE_{s, a \sim d^\pi} [(\bar{f}(s,a) - \Tcal^\pi\bar{f}(s,a))^2 + 2(\bar{f}(s,a) - \Tcal^\pi\bar{f}(s,a))(\Tcal^\pi\bar{f}(s,a) - Q^\pi(s,a)) + (\Tcal^\pi\bar{f}(s,a) - Q^\pi(s,a))^2]\\
        \le&  \EE_{s, a \sim d^\pi} [(\bar{f}(s,a) - \Tcal^\pi\bar{f}(s,a))^2] + 2\sqrt{ \EE_{s, a \sim d^\pi} (\bar{f}(s,a) - \Tcal^\pi\bar{f}(s,a))^2(\Tcal^\pi\bar{f}(s,a) - Q^\pi(s,a))^2} \\
        &+ \indent \indent \EE_{s, a \sim d^\pi}[(\Tcal^\pi\bar{f}(s,a) - Q^\pi(s,a))^2]\\
        \le&  \EE_{s, a \sim d^\pi} [(\bar{f}(s,a) - \Tcal^\pi\bar{f}(s,a))^2] + 2(R+1)\sqrt{ \EE_{s, a \sim d^\pi} (\bar{f}(s,a) - \Tcal^\pi\bar{f}(s,a))^2} + \EE_{s, a \sim d^\pi}[(\Tcal^\pi\bar{f}(s,a) - Q^\pi(s,a))^2]\\
        \le& \EE_{s, a \sim d^\pi} [(\bar{f}(s,a) - \Tcal^\pi\bar{f}(s,a))^2] + 2(R+1)\sqrt{ \EE_{s, a \sim d^\pi} (\bar{f}(s,a) - \Tcal^\pi\bar{f}(s,a))^2} \\
        &+ \gamma^2 \EE_{s, a \sim d^\pi, s' \sim P(\cdot|s,a), a' \sim \pi(\cdot|s') }[(\bar{f}(s',a') - Q^\pi(s',a'))^2]\\
        \le& \EE_{s, a \sim d^\pi} [(\bar{f}(s,a) - \Tcal^\pi\bar{f}(s,a))^2] + 2(R+1)\sqrt{ \EE_{s, a \sim d^\pi} (\bar{f}(s,a) - \Tcal^\pi\bar{f}(s,a))^2} + \gamma \EE_{s, a \sim d^\pi}(\bar{f}(s,a) - Q^\pi(s,a))^2,
    \end{align*}
    where the first inequality follows from Jensen's inequality, and the second inequality follows from \mbox{$\max_{s,a}\max\{|\Tcal^\pi\bar{f}(s,a)|, |Q^\pi(s,a)|\} \le R + 1$}, the third inequality follows again from Jensen's inequality,and the fourth inequality follows from Lemma \ref{lem:distribution_step_back}. Finally, we get:
    \begin{align*}
        &\EE_{s, a \sim d^\pi}(\bar{f}(s,a) - Q^\pi(s,a))^2\\
        \le& \frac{1}{1-\gamma}(\EE_{s, a \sim d^\pi} [(\bar{f}(s,a) - \Tcal^\pi\bar{f}(s,a))^2] + 2(R+1)\sqrt{ \EE_{s, a \sim d^\pi} (\bar{f}(s,a) - \Tcal^\pi\bar{f}(s,a))^2})\\
        =&  \frac{1}{1-\gamma}(\epsilon_{stat} + 2(R+1)\sqrt{\epsilon_{stat}}),
    \end{align*}
    where $\epsilon_{stat}$ is defined in Line \ref{line:eps_stat}. Finally, we can directly bound the errors for the advantages:
    \begin{align*}
        &\EE_{s,a \sim d^\pi}[((\bar{f}(s,a) - \EE_{a' \sim \pi(\cdot|s)}\bar{f}(s,a)) - A^\pi(s,a))^2 ]\\
        =& \EE_{s,a \sim d^\pi}[((\bar{f}(s,a) - \EE_{a' \sim \pi(\cdot|s)}\bar{f}(s,a)) - (Q^\pi(s,a) - \EE_{a' \sim \pi(\cdot|s)Q^\pi(s,a)}))^2 ]\\
        \le& 2 \EE_{s,a \sim d^\pi}[ (\bar{f}(s,a) - Q^\pi(s,a))^2] + 2\EE_{s \sim d^\pi}[(\EE_{a' \sim \pi(\cdot|s)}[\bar{f}(s,a') - Q^\pi(s,a')])^2]\\
        \le& 4 \EE_{s,a \sim d^\pi}[ (\bar{f}(s,a) - Q^\pi(s,a))^2]\\
        \le& \frac{4}{1-\gamma}(\epsilon_{stat} + 2(R+1)\sqrt{\epsilon_{stat}}),\\
        \le& \frac{4}{(1-\gamma^{1/L})\gamma L}(\epsilon_{stat} + 2(R+1)\sqrt{\epsilon_{stat}})\\
        =& \frac{4}{(1-\gamma^{1/L})\gamma L}(\epsilon_{stat} + 2(\frac{1}{1-\gamma}+1)\sqrt{\epsilon_{stat}})\\,
    \end{align*}
    where the second inequality follows from Jensen's inequality, and the fourth inequality follows from Lemma \ref{lem:gamma}. A similar analysis can be done with token-level FPE, noticing that the effective number of samples for token-level FPE is $NL$ because each utterance transition can be splitted into $L$ token transitions:
    \begin{align*}
        \EE_{s,a \sim \widetilde{d}^\pi}[((\bar{f}(s,a) - \EE_{a' \sim \pi(\cdot|s)}\bar{f}(s,a)) - \widetilde{A}^\pi(s,a))^2 ] \le& \frac{4}{1-\gamma}(\epsilon_{stat}L^{-1/2} + 2(R+1)\sqrt{\epsilon_{stat}}L^{-1/4}).\\
        \le& \frac{4}{(1-\gamma)L^{1/2}}(\epsilon_{stat} + 2(\frac{1}{1-\gamma}+1)\sqrt{\epsilon_{stat}}L^{1/4}).
    \end{align*}
\end{proof}

\subsection{Technical Lemmas} \label{sec:technical_lemmas}
First, we would like to prove an interesting lemma that allows us to compare the different discount factors for token level $\gamma^{1/L}$ and utterance level $\gamma$.
\begin{lemma}\label{lem:gamma}
    With the discount factor $\gamma \in (0,1)$, and $L$ being a positive number, we have that:
    \begin{align*}
        \frac{1}{1- \gamma} =& \frac{1}{(1 - \gamma^{1/L})(1 + \gamma^{1/L} + \gamma^{2/L} + \dots + \gamma^{(L-1)/L})}\\
        \le& \frac{1}{L\gamma(1-\gamma^{1/L})}
    \end{align*}
\end{lemma}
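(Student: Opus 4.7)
The plan is to prove Lemma \ref{lem:gamma} in two steps: first verify the displayed equality via a direct polynomial factorization, and then derive the subsequent inequality by a term-by-term comparison that uses only the fact that $\gamma \in (0,1)$.

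For the equality, I would set $x := \gamma^{1/L}$ and invoke the classical geometric-series factorization $1 - x^L = (1 - x)\bigl(1 + x + x^2 + \cdots + x^{L-1}\bigr)$. Since $x^L = \gamma$, this yields $1 - \gamma = (1 - \gamma^{1/L})\sum_{i=0}^{L-1} \gamma^{i/L}$, and taking reciprocals reproduces the claimed equality exactly, with no positivity issues since every factor is strictly positive for $\gamma \in (0,1)$.

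For the inequality, after cancelling the common positive factor $1 - \gamma^{1/L}$ from both denominators, it suffices to show $\sum_{i=0}^{L-1} \gamma^{i/L} \ge L\gamma$. Since $\gamma \in (0,1)$, the map $t \mapsto \gamma^{t}$ is strictly decreasing on $[0,1]$, so for each index $i \in \{0, 1, \ldots, L-1\}$ the exponent $i/L \le (L-1)/L \le 1$ gives $\gamma^{i/L} \ge \gamma^{(L-1)/L} \ge \gamma^{L/L} = \gamma$. Summing these $L$ pointwise bounds produces the required lower bound on the geometric sum.

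There is no genuine obstacle here; the argument is elementary algebra once one recognizes the geometric-series telescoping. The only thing worth double-checking is that the direction of the inequality is preserved when passing to reciprocals, which requires all quantities involved to be strictly positive, and this holds throughout because $\gamma \in (0,1)$ and $L$ is a positive integer. As a sanity check, in the limit $\gamma \to 1^-$ both sides of the inequality behave like $\tfrac{1}{L(1-\gamma^{1/L})}$, confirming that the bound is asymptotically tight and correctly captures the scaling $\tfrac{1}{1-\gamma} \approx \tfrac{L}{1-\gamma^{1/L}}$ that is subsequently exploited in the main theorem.
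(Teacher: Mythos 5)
Your proof is correct. The paper states Lemma \ref{lem:gamma} without providing a proof, and your argument supplies exactly the elementary justification one would expect: the equality is the geometric-series factorization $1-\gamma = (1-\gamma^{1/L})\sum_{i=0}^{L-1}\gamma^{i/L}$ with $x=\gamma^{1/L}$, and the inequality reduces, after cancelling the positive factor $1-\gamma^{1/L}$, to $\sum_{i=0}^{L-1}\gamma^{i/L}\ge L\gamma$, which follows from the termwise bound $\gamma^{i/L}\ge\gamma$ for $i/L\le 1$ and $\gamma\in(0,1)$. Your care about positivity when taking reciprocals and the limiting check as $\gamma\to 1^-$ are both sound; the only implicit assumption, shared with the paper, is that $L$ is a positive integer so that the $L$-term sum makes sense.
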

Below are some common technical lemmas useful for reinforcement learning.
\begin{lemma}[{\citep[Lemma 2]{zhou2023offline}}]  \label{lem:distribution_step_back}
For any policy $\pi$, and non-negative function $g(s,a)$, we have: 
\begin{align*}
\EE_{\bar s, \bar a \sim d^\pi}\EE_{s \sim P(\cdot | \bar s, \bar a), a\sim \pi(a | s)} \brk*{ g(s,a) } \leq  \frac{1}{\gamma} \EE_{s,a\sim d^\pi} \brk*{g(s,a)}. 
\end{align*}
where \(\mu_0\) denotes the initial state distribution (which is the same for all policies \(\pi\)). 
\end{lemma}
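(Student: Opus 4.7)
}
The plan is to push the expectation inside the definition of the discounted state-action occupancy and recognize that applying one transition-then-policy step to $d^\pi$ yields a measure that is precisely $d^\pi$ with its time index shifted by one, up to a multiplicative factor of $1/\gamma$ and the removal of the (nonnegative) initial-state mass. Concretely, define the ``one-step forward'' measure
\begin{align*}
d^\pi_{+}(s,a) \;:=\; \sum_{\bar s,\bar a} d^\pi(\bar s,\bar a)\, P(s\mid \bar s,\bar a)\,\pi(a\mid s),
\end{align*}
so that the LHS of the lemma equals $\sum_{s,a} d^\pi_{+}(s,a)\, g(s,a)$. It therefore suffices to show the pointwise bound $d^\pi_{+}(s,a) \leq \tfrac{1}{\gamma}\, d^\pi(s,a)$; multiplying by the nonnegative $g$ and summing then gives the result.

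To establish the pointwise bound, I would substitute the series definition $d^\pi(\bar s,\bar a) = (1-\gamma)\sum_{t=0}^\infty \gamma^t P^\pi(s_t=\bar s,a_t=\bar a)$ into $d^\pi_{+}$ and interchange the finite sum with the series (justified by nonnegativity and absolute summability, since $d^\pi$ is a probability distribution). The inner marginalization $\sum_{\bar s,\bar a} P^\pi(s_t=\bar s,a_t=\bar a)\,P(s\mid \bar s,\bar a)\,\pi(a\mid s)$ is, by the Markov property and the law of total probability, exactly $P^\pi(s_{t+1}=s, a_{t+1}=a)$. Reindexing the sum from $t$ to $t+1$ yields
\begin{align*}
d^\pi_{+}(s,a) \;=\; (1-\gamma)\sum_{t=1}^\infty \gamma^{\,t-1}\, P^\pi(s_t=s,a_t=a) \;=\; \frac{1}{\gamma}\Bigl(d^\pi(s,a)\;-\;(1-\gamma)\,\mu_0(s)\pi(a\mid s)\Bigr).
\end{align*}
Since $(1-\gamma)\mu_0(s)\pi(a\mid s)\geq 0$, we obtain $d^\pi_{+}(s,a)\leq \tfrac{1}{\gamma}\,d^\pi(s,a)$, which is exactly what is needed.

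This argument is essentially a bookkeeping/reindexing step, so there is no substantive obstacle; the only point requiring a little care is the interchange of the sum over $(\bar s,\bar a)$ with the infinite sum in $t$, which is immediate here because every term is nonnegative and the total mass of $d^\pi$ is one. The $1/\gamma$ factor on the right-hand side of the lemma is precisely the loss arising from dropping the $t=0$ initial-distribution mass when shifting the index.
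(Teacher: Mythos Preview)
Your proposal is correct and follows essentially the same route as the paper's proof: both establish the pointwise inequality $\sum_{\bar s,\bar a} d^\pi(\bar s,\bar a)\,P(s\mid \bar s,\bar a)\,\pi(a\mid s) \le \tfrac{1}{\gamma}\,d^\pi(s,a)$ by expanding the geometric series for $d^\pi$, shifting the time index by one, and dropping the nonnegative initial-state mass, then integrating against $g\ge 0$. Your write-up is in fact a bit more explicit about the reindexing and the interchange of sums, but the underlying argument is identical.
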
 
\begin{proof} 
Recall that \(\lim_{h \rightarrow \infty} \gamma^h = 0\). We start by noting that: 
\begin{align*} 
d^{\pi}(s,a) & = (1-\gamma)  ( \mu_0(s, a) +  \gamma d^{\pi}_1(s,a) + \gamma^2 d^\pi_2(s,a) +\dots )  \numberthis \label{eq:step_back1} \\ 
& \geq \gamma (1-\gamma)  \left(  \sum_{\bar s, \bar a} \mu_0(\bar s, \bar a) P(s | \bar s, \bar a) \pi(a| s)  + \gamma  \sum_{\bar s, \bar a} d^\pi_1(\bar s, \bar a) P(s | \bar s, \bar a) \pi(a| s)      + \dots \right) \\ 
& =  \gamma (1-\gamma) \sum_{\bar s, \bar a}\left(  \mu_0(\bar s, \bar a) + \gamma d^\pi_1(\bar s, \bar a) + \dots     \right) P(s | \bar s, \bar a) \pi(a | s)   \\  
& =  \gamma \sum_{\bar s, \bar a} d^\pi(\bar s, \bar a) P(s | \bar s, \bar a) \pi(a | s)   \numberthis \label{eq:step_back2}  \\  
& = \gamma \EE_{\bar s, \bar a \sim d^\pi} \brk*{P(s | \bar s, \bar a) \pi(a | s) }, 
\end{align*}  
where Line \ref{eq:step_back2} follows by plugging in the relation in Line \ref{eq:step_back1}  for \(\bar s, \bar a\). The above implies that for any function $g \geq 0$, 
\begin{align*} 
\sum_{s, a} d^\pi(s, a) g(s,a) \geq \sum_{s,a } \gamma \EE_{\bar s, \bar a \sim d^\pi} \brk{P(s | \bar s, \bar a) \pi(a | s) g(s,a)}, 
\end{align*} which implies that 
\begin{align*}
\EE_{\bar s, \bar a \sim d^\pi}\EE_{s \sim P(\cdot | \bar s, \bar a), a\sim \pi(a | s)} \brk*{ g(s,a) } \leq  \frac{1}{\gamma} \EE_{s,a\sim d^\pi} \brk*{g(s,a)}. 
\end{align*}
\end{proof}

\begin{lemma}[{Least squares generalization bound, \citep[Lemma 3]{song2023hybrid}}] 
\label{lem:sq_loss_generalization} 
Let \(R > 0\), \(\delta \in (0, 1)\), and consider a sequential function estimation setting with an instance space $\Xcal$ and target space $\Ycal$. Let \(\Hcal: \Xcal \mapsto [-R, R]\) be a class of real valued functions. Let \(\Dcal = \crl*{(x_1, y_1), \dots, (x_T, y_T)}\) be a dataset of \(T\) points where $x_t \sim \rho_t := \rho_t(x_{1:t-1},y_{1:t-1})$, and \(y_t\) is sampled via the conditional probability $p_t( x_t)$ (which could be adversarially chosen). Additionally, suppose that \(\max_t \abs{y_t} \leq R\) and \(\max_{h} \max_{x} \abs*{h(x)} \leq R\). Then, the least square solution \(\widehat{h} \leftarrow \argmin_{h \in \Hcal} \sum_{t=1}^T \prn*{h(x_t) - y_t}^2\) satisfies  
\begin{align*} 
 \sum_{t=1}^T \EE_{x \sim \rho_t, y \sim p_t(x)} \brk*{\prn{\widehat{h}(x) - y}^2} &\leq 3\inf_{h \in \Hcal}  \sum_{t=1}^T \EE_{x \sim \rho_t, y \sim p_t(x)} \brk*{\prn{h(x) - y}^2}  + 256 R^2 \log(2 \abs{\Hcal}/\delta) 
\end{align*} 
with probability at least \(1 - \delta\). 
\end{lemma}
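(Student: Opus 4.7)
My plan is to establish this bound by an ``offset'' concentration argument combined with a union bound over the finite class $\Hcal$. First, I fix a comparator $h^\star \in \Hcal$ attaining (or arbitrarily close to) the infimum on the right-hand side, and for each $h \in \Hcal$ I form the excess-loss sequence $Z_t(h) = (h(x_t)-y_t)^2 - (h^\star(x_t)-y_t)^2$. Because the data-generating process is sequential, the centered sequence $W_t(h) = Z_t(h) - \EE_t[Z_t(h)]$ (where $\EE_t$ conditions on the past and on $x_t \sim \rho_t$, $y_t \sim p_t(x_t)$) is a martingale difference sequence adapted to the natural filtration. This is the object on which I run concentration.

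Two pointwise bounds drive the argument. Boundedness of $h,h^\star,y$ by $R$ immediately gives $|Z_t(h)| \leq 4R^2$. For the conditional second moment, I use the factorization
\begin{align*}
(h(x)-y)^2 - (h^\star(x)-y)^2 = \bigl(h(x) - h^\star(x)\bigr)\bigl(h(x) + h^\star(x) - 2y\bigr),
\end{align*}
whose right factor is bounded by $4R$, yielding $\EE_t[Z_t(h)^2] \leq 16R^2 \EE_t[(h(x_t)-h^\star(x_t))^2]$. Then the elementary bound $(h-h^\star)^2 \leq 2(h-y)^2 + 2(h^\star-y)^2$ controls the accumulated variance $V(h) = \sum_t \EE_t[Z_t(h)^2]$ by a constant times $R^2 (A(h) + A^\star)$, where $A(h) = \sum_t \EE_t[(h(x_t)-y_t)^2]$ and $A^\star = \sum_t \EE_t[(h^\star(x_t)-y_t)^2]$.

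The concentration step is Freedman's inequality applied to $\sum_t W_t(h)$ for each $h$, followed by a union bound over the $|\Hcal|$ elements of the class. This yields, with probability at least $1-\delta$, a uniform inequality of the form
\begin{align*}
\Bigl|\sum_t Z_t(h) - \sum_t \EE_t[Z_t(h)]\Bigr| \;\leq\; c_1 R\sqrt{\bigl(A(h) + A^\star\bigr)\log(2|\Hcal|/\delta)} + c_2 R^2 \log(2|\Hcal|/\delta),
\end{align*}
valid simultaneously for every $h \in \Hcal$. Instantiating at $h = \widehat h$ and using empirical optimality $\sum_t Z_t(\widehat h) \leq 0$, I rearrange into a one-sided inequality on the population excess risk $A(\widehat h) - A^\star$, with the square-root cross term depending on $A(\widehat h) + A^\star$.

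The final step is to absorb the cross term via AM-GM: $c_1 R\sqrt{A(\widehat h)\log(\cdot)} \leq \tfrac{1}{2}A(\widehat h) + \tfrac{c_1^2}{2} R^2 \log(\cdot)$, and similarly for $A^\star$. Moving the $\tfrac{1}{2}A(\widehat h)$ term to the left-hand side turns the coefficient on $A^\star$ into $3/2$, and after doubling both sides I obtain $A(\widehat h) \leq 3 A^\star + O(R^2 \log(2|\Hcal|/\delta))$, which is the desired inequality. The infimum (as opposed to minimum) over $\Hcal$ on the right-hand side is handled by a standard limiting argument, choosing $h^\star$ within $\varepsilon$ of the infimum and letting $\varepsilon \to 0$. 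The main obstacle I anticipate is not the structure of the argument, which is classical, but the bookkeeping needed to show that the constants in Freedman's bound and the AM-GM split combine into precisely the $256R^2$ factor stated in the lemma; loose tracking at any step would inflate this constant but not change the form of the result.
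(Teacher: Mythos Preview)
The paper does not actually prove this lemma: it is stated as a technical tool and attributed directly to \citep[Lemma 3]{song2023hybrid}, with no proof reproduced. So there is no in-paper argument to compare against. Your proposed approach---Freedman's inequality on the martingale differences of the excess-loss sequence, a union bound over the finite class $\Hcal$, the variance bound via the factorization $(h-y)^2-(h^\star-y)^2=(h-h^\star)(h+h^\star-2y)$, and AM--GM to absorb the cross term---is the standard route for this type of result and is exactly how such lemmas are established in the cited literature. The structure is correct; as you note, the only delicate part is tracking constants to recover the specific $256R^2$ factor, which is a matter of bookkeeping rather than a conceptual gap.
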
 


\end{document}